\newtheorem{theorem}{Theorem}
\newtheorem*{theorem*}{Theorem}
\newtheorem{lemma}{Lemma}
\newtheorem*{definition*}{Definition}
\newtheorem*{lemma*}{Lemma}
\newtheorem*{corollary*}{Corollary}
\newtheorem*{claim*}{Claim}
\newtheorem*{proposition*}{Proposition}
\newtheorem{assumption}{Assumption}
\newtheorem*{assumption*}{Assumption}
\title{Robust Estimation of Tree Structured Ising Models}
\author{Ashish Katiyar \\ \href{mailto:a.katiyar@utexas.edu}{a.katiyar@utexas.edu} 
\and Vatsal Shah\\ \href{mailto:vatsalshah1106@utexas.edu}{vatsalshah1106@utexas.edu}
\and Constantine Caramanis\\ \href{mailto:constantine@utexas.edu}{constantine@utexas.edu}}
\date{The University of Texas at Austin}
\begin{document}
	\maketitle
\begin{abstract}
       We consider the task of learning Ising models when the signs of different random variables are flipped independently with possibly unequal, unknown probabilities. In this paper, we focus on the problem of robust estimation of tree-structured Ising models. Without any additional assumption of side information, this is an open problem. We first prove that this problem is unidentifiable, however, this unidentifiability is limited to a small equivalence class of trees formed by leaf nodes exchanging positions with their neighbors.  Next, we propose an algorithm to solve the above problem with logarithmic sample complexity in the number of nodes and polynomial run-time complexity. Lastly, we empirically demonstrate that, as expected, existing algorithms are not inherently robust in the proposed setting whereas our algorithm correctly recovers the underlying equivalence class.
\end{abstract}

\maketitle
\newcommand{\ind}[4]{{#1}_{#2}^{#3, #4}}
\newcommand{\bs}[1]{\boldsymbol{#1}}
\newcommand{\E}[1]{\mathbb{E}{\left[#1\right]}}
\newcommand{\tempa}[2]{\mathbb{E}\left[{#1}_{#2}\right]}
\newcommand{\T}{\mathcal{T}_{T^*}}
\newcommand{\fa}{\textsc{FindEC}}
\newcommand{\fb}{\textsc{SplitTree}}
\newcommand{\fc}{\textsc{Recurse}}
\newcommand{\setx}{\mathcal{X}}
\newcommand{\sete}{\mathcal{E}}
\newcommand{\setp}[1]{\mathcal{P}_{i}^{#1}}
\newcommand{\Or}{\mathcal{O}}
\newcommand{\es}{\mathcal{E}^*}
\newcommand{\ep}{\mathcal{E}'}
\newcommand{\ee}{\mathcal{E}^e}
\newcommand{\corr}[2]{\rho_{#1, #2}}
\newcommand{\cov}[2]{\Sigma_{#1, #2}}

\section{Introduction}

Undirected graphical models or Markov Random Fields (MRFs) are a powerful tool for describing high dimensional distributions using an associated dependency graph $\mathbb{G}$, which encodes the conditional dependencies between random variables. In this paper, we focus on a special class of MRFs called the Ising Model, first introduced in \cite{ising1925beitrag} to represent spin systems in quantum physics \cite{brush1967history}. Recently, Ising models have also proven quite popular in biology \cite{jaimovich2006towards}, engineering \cite{choi2010exploiting, schneidman2006weak}, computer vision \cite{roth2005fields}, and also in the optimization and OR communities, including in finance \cite{zhou2007self}, and social networks \cite{mesnards2018detecting}. The special class of tree-structured Ising models is beneficial for applications in statistical physics over non-amenable graphs. A detailed description and further references can be found in \cite{martinelli2003ising}.

In this paper, we explore the problem of learning the underlying graph of tree-structured Ising models with independent, unknown, unequal error probabilities. Such scenarios are relatively common in networks where certain malfunctioning sensors can report flipped bits. The presence of noise breaks down the conditional independence structure of the noiseless setting, and the noisy samples may no longer follow an Ising model distribution.

In 2011, \cite{chenlearning} highlighted the importance of robustness in Ising models. 
Recent works in \cite{goel2019learning,hamilton2017information,lindgrenrobust} have tried to address this problem. However, they assume the side information of the error probability, which is mostly unavailable and difficult to estimate in most practical settings. In the closely related work for tree-structured Ising models, \cite{pmlr-v89-nikolakakis19a,nikolakakis2019non} address this problem as they build on the seminal work of \cite{chow1968approximating}, popularly known as the Chow-Liu algorithm, which proposes the use of maximum-weight spanning tree of mutual information as an estimate of the tree structure. In \cite{pmlr-v89-nikolakakis19a}, they consider the simplified case where each node has an equal probability of error and \cite{nikolakakis2019non} assumes that the error doesn't alter the order of mutual information. Both assumptions imply that asymptotically, Chow-Liu converges to the correct tree. However, these assumptions don't arise naturally and are difficult to check from access to only noisy data. To the best of our knowledge, there doesn't exist an analysis of what happens beyond this limiting assumption of order preservation of mutual information.

In fact, section 5.1 of \cite{bresler2008reconstruction} provides an example of the unidentifiability of the problem for a graph on 3 nodes and says that the problem is ill-defined. We reconsider this problem, and show that for the special class of tree structured Ising models, although the problem is not identifiable, nevertheless the unidentifiability is limited to an equivalence class of trees. Thus, more appropriately, one can cast the problem of learning in the presence of unknown, unequal noise as the problem of learning this equivalence class.

\paragraph{Key Contributions}
\begin{enumerate}
    \item We show that the problem of learning tree structured Ising models when the observations flip with independent, unknown, possibly unequal probability is unidentifiable (Theorem \ref{thm:unident}).
    \item As depicted in Figure \ref{fig:identifiability}, the unidentifiability is restricted to the equivalence class of trees obtained by permuting within the leaf nodes and their neighbors (Theorem \ref{thm:lim_unident}).
    \item We provide an algorithm to recover this equivalence class from noisy samples. The sample complexity is logarithmic and the time complexity is polynomial in the number of nodes. (Section \ref{sec:algo})
    \item We experimentally demonstrate that existing algorithms like Chow-Liu or Sparsitron for learning Ising models are not inherently robust against this noise model whereas our algorithm correctly recovers the equivalence class.(Section \ref{sec:exp})
\end{enumerate}
\section{Related Work}
Efficient algorithms for structure learning of Ising models can be divided into three main categories based on their assumptions: i) special graph structures \cite{anandkumar2012learning,chow1968approximating,dasgupta1999learning,srebro2003maximum, bresler2016learning}, ii) nature of interaction between variables such as correlation decay property (CDP) \cite{bresler2014hardness, bresler2014structure, bresler2008reconstruction, lee2007efficient, ravikumar2010high}, iii) bounded degree/width \cite{bresler2015efficiently, daskalakis2019testing, klivans2017learning, lokhov2018optimal, wu2018sparse, vuffray2016interaction}. However, these algorithms assume access to uncorrupted samples.

In the last decade, there has been a lot of research on robust estimation of graphical models \cite{pmlr-v97-katiyar19a, kolar2012estimating,liu2012high,loh2011high,wang2017robust,yang2015robust}, and \cite{pmlr-v97-katiyar19a} explicitly considered partial identifiability in the presence of noise for the Gaussian case. However, extending the above frameworks to the robust structure learning of Ising models remains a challenge. 
\cite{goel2019learning,hamilton2017information,lindgrenrobust} have tried to solve the problem of robust estimation of general Ising models under the assumption of access to the probability of error for each node. Recently, \cite{pmlr-v89-nikolakakis19a, nikolakakis2019non} proposed algorithms to estimate the underlying graph structure of tree-structured Ising models in the presence of noise under the strong assumption that the probability of error does not alter the order of mutual information order for the tree. Both these assumptions are restrictive and impractical. In this paper, we present the first algorithm that can robustly recover the underlying tree structured Ising model (upto an equivalence class) in the presence of corruption via unknown, unequal, independent noise. 

\section{Identifiability Result}
\paragraph{Problem Setup:}
Let $\setx = \{X_i^*: i\in[n]\}$ be a set of random variables with support on $\{-1, 1\}$ and $\mathbf{X} = [X_1^*, X_2^*\dots X_n^*]$ be the vector of these random variables. Suppose the conditional independence structure of the variables of $\setx$ is given by a tree $T^*$. This implies that the distribution of $\setx$ can be represented by an Ising model. In our model, we have observations where each $X_i^*$ flips with probability $q_i$. We denote the probability of error by the vector $\mathbf{q} = [q_1, q_2, \dots q_n]$ and the noisy random variables by $\setx^e = \{X_i^e: i\in[n]\}$. The error in $X_i^*$ disrupts the tree structured conditional independence and the graphical model of $\setx^e$ is a complete graph if $q_i > 0$ $\forall i\in[n]$. In fact, $\setx^e$ need not be an Ising model. Given samples of $\setx^e$, we want to find the tree structure $T^*$. The ideas of the analysis for this problem are based on graph structure properties introduced in \cite{pmlr-v97-katiyar19a} where they were applied in the different context of Gaussian graphical models.
 \subsubsection*{Equivalence Class for $T^*$} 
Given a tree $T^*$, let us define its equivalence class denoted by $\T$. 
Let the set of all nodes that are connected to one or more leaf nodes of $T^*$ be denoted by $\mathcal{A}$.
Construct sets $\mathcal{L}_{a_1}, \mathcal{L}_{a_2}, \dots, \mathcal{L}_{a_{|\mathcal{A}|}}$ from the nodes $a_i \in \mathcal{A}$ such that set $\mathcal{L}_{a_i}$ contains all the leaf nodes connected to $a_i$ in $T^*$ along with $a_i$. Sample one node from each newly constructed set $\mathcal{L}_{(.)}$ and create another subset $\mathcal{S}^m$. There exists $q = |\mathcal{L}_{a_1}| |\mathcal{L}_{a_2}|\dots|\mathcal{L}_{a_{|\mathcal{A}|}}|$ such unique subsets.
Create a tree $T^m$ by setting all the nodes in $\mathcal{S}^m$ as the internal node and all the remaining nodes in their corresponding sets $\mathcal{L}_{(.)}$ as leaf nodes in the same position where $\mathcal{L}_{(.)}$ was in $T^*$.
Therefore, there exists a one-to-one equivalence relationship between tree $T^m$ and its corresponding set $\mathcal{S}^m$. We define a set of these trees as $\mathcal{T}_{T^*}$. 
\begin{equation*}
\mathcal{T}_{T^*} = \{ T^m \mid m \in \{1, 2, \hdots q\}  \}.
\end{equation*}
\begin{figure}
    \centering
    \includegraphics[scale = 0.4]{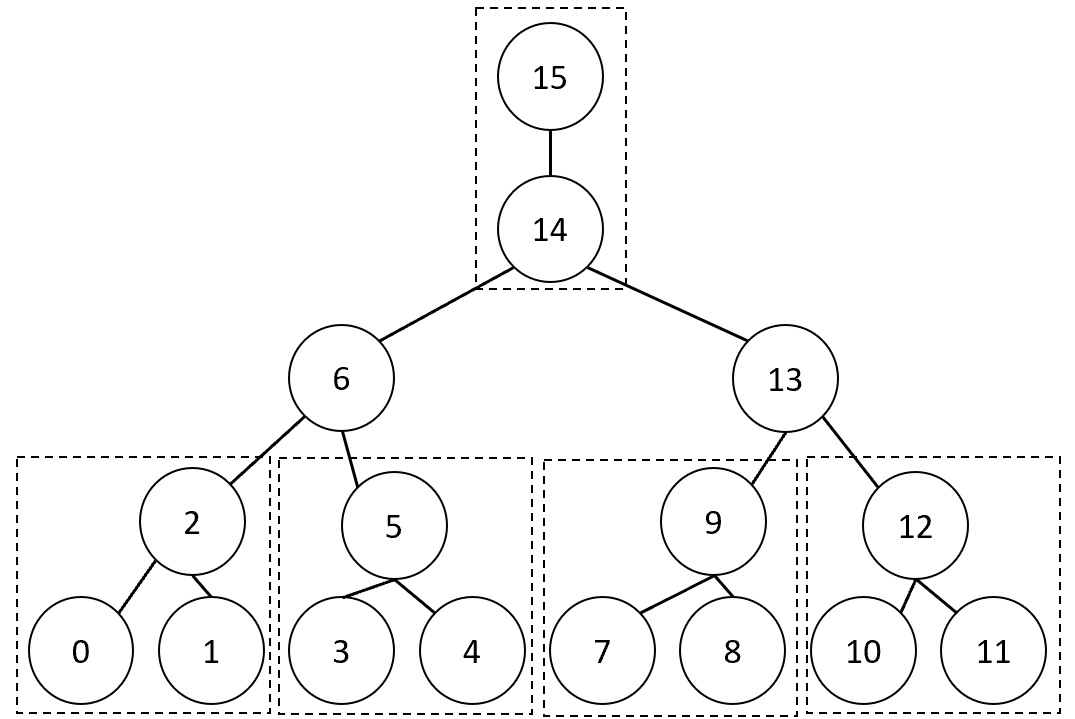}
    \caption{$\mathcal{T}_{T^*}$ for the above $T^*$ is obtained by permuting the nodes within the dotted regions. Noisy samples from this Ising model can be used to estimate the tree only upto the equivalence class $\T$.}
    \label{fig:identifiability}
\end{figure}
Figure \ref{fig:identifiability} gives an example of $\mathcal{T}_{T^*}$.
\paragraph{Model Assumptions}
\begin{assumption}\label{as:bounded_mean}
(Bounded Mean) The absolute value of the mean - $|\E{X_i}|\leq \mu_{max}<1$ $\forall i\in[n]$.
\end{assumption}
\begin{assumption}\label{as:bounded_corr}
(Bounded Correlation) Correlation $\rho_{i,j}$ of any two nodes $X_i$ and $X_j$ connected by an edge - $\rho_{min}\leq|\rho_{i,j}|\leq\rho_{max}$ where $0<\rho_{min}\leq\rho_{max}<1$.
\end{assumption}
\begin{assumption}\label{as:p_e_bound}
(Bounded error probability) The error probability - $0\leq q_i \leq q_{max} < 0.5$ $\forall i\in[n]$.
\end{assumption} 
These assumptions arise naturally. \textit{Assumption \ref{as:bounded_mean}} ensures that no variable approaches a constant and hence gets disconnected from the tree.
The lower bound in \textit{Assumption \ref{as:bounded_corr}} also ensures that every node is connected. The upper bound in \textit{Assumption \ref{as:bounded_corr}} ensures that no two nodes are duplicated. \textit{Assumption \ref{as:p_e_bound}} ensures the noisy node doesn't become independent of every other node due to the error.
\subsection*{Limited unidentifiability of the problem}
In Theorem \ref{thm:lim_unident}, we prove that it is possible to recover $\T$ from the samples of $\setx^e$. Further, we prove that given the distribution of $\setx^e$, there exists an Ising model for each tree in $\T$ such that, for some noise vector, its noisy distribution is the same as that of $\setx^e$ in Theorem \ref{thm:unident}
\begin{theorem}\label{thm:lim_unident}
Suppose $\setx'$ and $\setx$ are sets of binary valued random variables satisfying assumption \ref{as:bounded_mean} whose conditional independence is given by trees $T'$ and $T^*$ respectively satisfying assumption \ref{as:bounded_corr}. Assume that each node in both these distributions $T'$ and $T^*$ is allowed to be flipped independently with probability satisfying assumption 3. Let $\es$ and $\ep$ represent the noisy distributions of $\setx$ and $\setx'$ respectively. If $\ep=\es$, then $T' \in \mathcal{T}_{T^*}$.
\end{theorem}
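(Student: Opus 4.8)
The plan is to show that the distribution of $\setx^e$ determines $T^*$ up to the operations that define $\mathcal{T}_{T^*}$, namely swapping a leaf with its (unique) neighbor. I would work with pairwise statistics only: for binary variables the relevant quantities are the means $\E{X_i^e}$ and the pairwise correlations $\corr{i}{j}$, and one checks that if $X_i^e$ is the noisy version of $X_i^*$ with flip probability $q_i$, then $\E{X_i^e}=(1-2q_i)\E{X_i^*}$ and $\corr{i}{j}^e=(1-2q_i)(1-2q_j)\corr{i}{j}^*$. In the noiseless tree model the correlation multiplies along paths: $\corr{i}{j}^*=\prod_{(k,l)\in \mathrm{path}(i,j)}\corr{k}{l}^*$. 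Hence, writing $\alpha_i := 1-2q_i \in (0,1]$, the observable quantity $\corr{i}{j}^e = \alpha_i\alpha_j \prod_{(k,l)\in\mathrm{path}(i,j)}\corr{k}{l}^*$ is, up to the node-indexed factors $\alpha_i$, exactly a path-product metric on $T^*$. This is the same algebraic structure exploited in \cite{pmlr-v97-katiyar19a} for the Gaussian case, so the strategy is to import their graph-structure combinatorics into the Ising setting.

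The first concrete step is to define, for each unordered triple $\{i,j,k\}$, the ``information distance''-type quantity $\frac{\corr{i}{j}^e\,\corr{i}{k}^e}{\corr{j}{k}^e}$; the $\alpha_j,\alpha_k$ factors cancel and only an $\alpha_i^2$ and path-products survive, so that taking appropriate products and ratios of such triple-statistics one can test, purely from $\es$, whether a given node lies on the path between two others, and whether two nodes are separated by exactly one internal vertex. These tests are invariant under the transformations generating $\mathcal{T}_{T^*}$ and, conversely, I would argue that they pin down everything else. So the second step is: given $\es=\ep$, the triple-statistics computed from $T'$ and from $T^*$ agree, hence $T'$ and $T^*$ induce the same separation/adjacency structure on all nodes except possibly for the ambiguous leaf-neighbor pairs, which forces $T'\in\mathcal{T}_{T^*}$. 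Concretely I would (a) show non-leaf internal structure is rigid — any node that is not a leaf and not adjacent to a leaf has its neighbors determined; (b) show that for a node $a$ adjacent to some leaves, the set $\mathcal{L}_a$ (the leaves plus $a$) is identifiable as a block, and within that block the only freedom is which element plays the internal role; (c) assemble these to conclude the two trees differ only by the permutations described, i.e.\ lie in a common equivalence class.

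The main obstacle is step (b): disentangling a node $a$ from its pendant leaves. For a true leaf $\ell$ attached to $a$, the quantity $\corr{\ell}{j}^e = \alpha_\ell\,\corr{\ell}{a}^*\cdot(\text{path from }a\text{ to }j)$ looks, from the outside, indistinguishable from what one would get if $\ell$ were internal and $a$ were the pendant leaf, because the single unknown factor $\alpha_\ell\,\corr{\ell}{a}^*$ can be re-attributed. One must show that this is the \emph{only} ambiguity: i.e., that the flip probabilities and edge correlations cannot conspire to move any non-pendant edge, or to merge two different $\mathcal{L}_{a}$ blocks, without changing some observable triple-statistic. I expect this to require carefully using Assumptions~\ref{as:bounded_mean}--\ref{as:p_e_bound}: the bound $q_{max}<0.5$ keeps every $\alpha_i>0$ so no correlation is killed, $\rho_{min}>0$ keeps the tree connected so path-products are nonzero, and $\mu_{max}<1$, $\rho_{max}<1$ prevent degenerate coincidences (a variable becoming constant, or two variables becoming perfectly correlated) that would create spurious extra ambiguity. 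A secondary subtlety is handling signs: correlations can be negative, so all identifiability arguments must be phrased in terms of absolute values or squares, and one must check the sign pattern itself carries no extra information beyond the equivalence class. Once the block structure and rigid internal structure are established, assembling them into ``$T'\in\mathcal{T}_{T^*}$'' is bookkeeping following the construction of $\mathcal{T}_{T^*}$ given above.
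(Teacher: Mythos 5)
Your proposal follows essentially the same route as the paper: exploit the fact that independent flips multiply pairwise covariances by node-indexed factors $(1-2q_i)(1-2q_j)$, combine this with correlation decay (path-products) so that ratios of triple/quartet statistics cancel the noise factors and yield observable separation (star/non-star) tests, identify the leaf--neighbor swap as the sole surviving ambiguity, and import the combinatorial assembly from the Gaussian analysis of \cite{pmlr-v97-katiyar19a} --- which is exactly how the paper proceeds, phrasing the same ratio tests as consistency of the estimated error probabilities $\hat{q}_i^{j,k}$. The only caveat is a minor imprecision: the relation $(1-2q_i)(1-2q_j)$ holds for covariances, not correlations (noise also changes the variances), but since the correct relation is still a product of node-indexed factors times a path-product, your ratio tests go through unchanged.
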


\begin{proof}
The proof of this theorem relies on this key observation: the probability distribution of the noisy samples completely defines the categorization of any set of 4 nodes as star/non-star shape. Any set of 4 nodes forms a non-star if there exists at least one edge which, when removed, splits the tree into two subtrees such that exactly 2 of the 4 nodes lie in one subtree and the other 2 nodes lie in the other subtree. The nodes in the same subtree form a pair. If the set is not a non-star, it is categorized as a star. For example, in Figure \ref{fig:identifiability}, $\{0,1,7,12\}$ form a non-star and $\{0,7,12,13\}$ form a star.

Once we prove this key observation, the rest of the proof follows easily and we refer the reader to parts (ii) and (iii) of the proof of Theorem 2 from \cite{pmlr-v97-katiyar19a}.
We denote the correlation between two nodes $X_i$ and $X_j$ in the non-noisy setting by $\rho_{i,j}$ and in the noisy setting by $\rho'_{i,j}$. Similarly the covariance is denoted by $\Sigma_{i,j}$ and $\Sigma'_{i,j}$. We utilize the correlation decay property of tree structured Ising models which is stated in Lemma \ref{lemma:corr_decay}.
\begin{lemma}\label{lemma:corr_decay}
(Correlation Decay) Any 2 nodes $X_{i_1}$ and $X_{i_k}$ have the conditional independence relation specified by a tree structured Ising Model such that the path between them is  $(X_{i_1}\rightarrow X_{i_2} \rightarrow X_{i_3}\dots  \rightarrow X_{i_k})$ if and only if their correlation is given by:
\begin{equation}\label{eq:corr_decay}
    \rho_{i_1i_k} = \prod_{l=2}^{k}\rho_{i_{l-1},i_l}.
\end{equation}
\end{lemma}
The proof of this lemma is provided in Appendix, \ref{app:corr_decay}.
We also prove that $\E{X_i^e} = (1-2q_i)\E{X_i}$ and $\Sigma'_{i,j} = (1-2q_i)(1-2q_j)\Sigma_{i,j}$ in Appendix \ref{app:cov_noisy}.


\subsection*{Categorizing a set of 4 nodes as star/non-star}

We first look at a graphical model on 3 nodes $X_1, X_2, X_3$ whose conditional independence is given by a chain with $X_2\perp X_3|X_1$.  By Lemma \ref{lemma:corr_decay}, we have $\Sigma_{2,3}\Sigma_{1,1} = \Sigma_{1,2}\Sigma_{1,3}$.

Suppose the sign of $X_1, X_2, X_3$ flip independently with probability $q_1, q_2, q_3$ respectively. Substituting the values of $\Sigma_{2,3}, \Sigma_{1,1}, \Sigma_{1,2}$ and $\Sigma_{1,3}$ in terms of their noisy counterparts gives us:

\begin{equation}\label{eq:quadratic}
(1-2q_1)^2 = 1 - \Sigma'_{1,1} + \frac{\Sigma'_{1,2}\Sigma'_{1,3}}{\Sigma'_{2,3}}.
\end{equation}

If we had prior knowledge about the underlying conditional independence relation, this quadratic equation, which depends only on the quantities measurable from noisy data, could be solved to estimate the probability of error of $X_1$.

We prove in Appendix \ref{sec:valid_sol} that Equation (\ref{eq:quadratic}) gives a valid solution for any configuration of 3 nodes in a tree structured Ising model. Therefore, in the absence of the knowledge that $X_2\perp X_3|X_1$, we can estimate a probability of error for each $X_i$ which enforces the underlying graph structure to represent the other 2 nodes independent conditioned on $X_i$. Thus, irrespective of the true underlying conditional independence relation we can always find a probability of error for each node which makes any other pair of nodes conditionally independent. We use this concept to classify a tree on 4 nodes as star or non-star shaped.

We follow a notation where $\hat{q}_i^{j,k}$ denotes the estimated probability of error of $X_i$ which enforces $X_j\perp X_k|X_i$. \vspace{-3pt}
 \subsubsection*{Condition for star/non-star shape:}\vspace{-3pt}
Any set of 4 nodes $\{X_1, X_2, X_3, X_4\}$ is categorized as a non-star with $(X_1, X_2)$ forming one pair and $(X_3, X_4)$ forming another pair if and only if:
\begin{equation*}\label{eq:non_star_cond}
\begin{aligned}
    \ind{\hat{q}}{1}{2}{3} = \ind{\hat{q}}{1}{2}{4} \neq \ind{\hat{q}}{1}{3}{4},
    \ind{\hat{q}}{2}{1}{3} = \ind{\hat{q}}{2}{1}{4} \neq \ind{\hat{q}}{2}{3}{4},\\
    \ind{\hat{q}}{3}{2}{4} = \ind{\hat{q}}{3}{1}{4} \neq \ind{\hat{q}}{3}{1}{2},
    \ind{\hat{q}}{4}{2}{3} = \ind{\hat{q}}{4}{1}{3} \neq \ind{\hat{q}}{4}{1}{2}.
\end{aligned}
\end{equation*}
From Equation ($\ref{eq:quadratic}$), this is equivalent to the condition that $\frac{\corr{1}{3}'}{\corr{1}{4}'} = \frac{\corr{2}{3}'}{\corr{2}{4}'}, \frac{\corr{1}{2}'}{\corr{1}{4}'}\neq \frac{\corr{3}{2}'}{\corr{3}{4}'}.$

Any set of 4 nodes $\{X_1, X_2, X_3, X_4\}$ is categorized as a star if and only if:
\begin{equation*}\label{eq:star_cond}
\begin{aligned}
    \ind{\hat{q}}{1}{2}{3} = \ind{\hat{q}}{1}{2}{4} = \ind{\hat{q}}{1}{3}{4},
    \ind{\hat{q}}{2}{1}{3} = \ind{\hat{q}}{2}{1}{4} = \ind{\hat{q}}{2}{3}{4},\\
    \ind{\hat{q}}{3}{2}{4} = \ind{\hat{q}}{3}{1}{4} = \ind{\hat{q}}{3}{1}{2},
    \ind{\hat{q}}{4}{2}{3} = \ind{\hat{q}}{4}{1}{3} = \ind{\hat{q}}{4}{1}{2}.
    \end{aligned}
\end{equation*}
This is equivalent to the condition that $\frac{\corr{1}{3}'}{\corr{1}{4}'} = \frac{\corr{2}{3}'}{\corr{2}{4}'} =  \frac{\corr{1}{2}'}{\corr{1}{4}'}$. 

In order to see how these conditions correspond to a star/non-star shape, lets consider a chain on 4 nodes as shown in Figure \ref{fig:chain}. 
Let each $X_i$ be flipped with probability $q_i$. With access only to the noisy samples, we estimate the probability of error for each node in order to find the underlying tree. The key idea is that when we estimate the probability of error for a given node, it should be consistent across different conditional independence relations. For instance in the present case, the error estimates $\hat{q}_2^{1,3}$ and $\hat{q}_2^{1,4}$ of $X_2$ satisfy $\hat{q}_2^{1,3} = \hat{q}_2^{1,4} = q_2$. We show that $\hat{q}_2^{3,4} \neq \hat{q}_2^{1,3}$ (Lemma \ref{lemma:equality_inequality}(b)). We also prove that $\hat{q}_1^{2,3} = \hat{q}_1^{2,4} \neq \hat{q}_1^{3,4}$(Lemma \ref{lemma:equality_inequality}). These imply that $X_3\not\perp X_4|X_2$ and $X_3\not\perp X_4|X_1$. By symmetry, we have $X_1\not\perp X_2|X_3$ and $X_1\not\perp X_2|X_4$. These conditional independence statements imply that $X_1, X_2, X_3$ and $X_4$ form a chain with $(X_1, X_2)$ on one side of the chain and $(X_3, X_4)$ on the other side of the chain. 

Next, we consider the case when 4 nodes form a star structured graphical model as in Figure \ref{fig:star}. Under the same noisy observation setting we prove that $\hat{q}_1^{2,3} = \hat{q}_1^{2,4} = \hat{q}_1^{3,4}$, $\hat{q}_2^{1,3} = \hat{q}_2^{1,4} = \hat{q}_2^{3,4}$, $\hat{q}_3^{1,2} = \hat{q}_3^{1,4} = \hat{q}_3^{2,4}$ and $\hat{q}_4^{1,3} = \hat{q}_4^{1,2} = \hat{q}_4^{3,2}$ (Lemma \ref{lemma:star_equality}). Thus, we can conclude that the underlying graphical model is star structured.
\begin{figure}
    \begin{minipage}[t]{.5\linewidth}
    \centering
    \includegraphics[width=.5\linewidth]{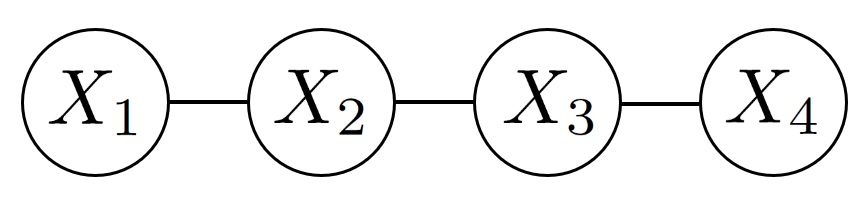}
    \caption{A chain structure.}
    \label{fig:chain}
    \end{minipage}
    \hfill
    \begin{minipage}[t]{.5\linewidth}
    \centering
    \includegraphics[width=.35\linewidth]{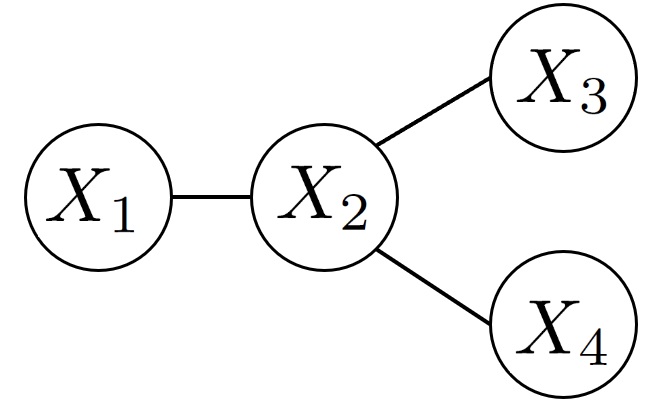}
    \caption{A Star structure.}
    \label{fig:star}
    \end{minipage}
\end{figure}
\begin{lemma}\label{lemma:equality_inequality}
Let the graphical model on $X_1, X_2, X_3$ and $X_4$ form a chain as shown in Figure $\ref{fig:chain}$. Suppose the bits of each $X_i$ are flipped with probability $q_i<0.5$. Then the following holds:
\begin{equation*}
    \text{(a) } \hat{q}_1^{2,3} = \hat{q}_1^{2,4},\text{ (b) } \hat{q}_2^{1,3} \neq \hat{q}_2^{3,4},\ind{\hat{q}}{1}{2}{3} \neq \ind{\hat{q}}{1}{3}{4}
\end{equation*}
\end{lemma}
\begin{lemma}\label{lemma:star_equality}
Let the graphical model on $X_1, X_2, X_3$ and  $X_4$ form a star as shown in Figure \ref{fig:star}. Suppose the bits of each $X_i$ are flipped with probability $q_i<0.5$. Then the following holds:
\begin{equation*}
    \ind{\hat{q}}{1}{2}{3} = \ind{\hat{q}}{1}{2}{4} = \ind{\hat{q}}{1}{4}{3}
\end{equation*}
\end{lemma}

The proof of these lemmas and the details of extending these results to generic trees require basic algebraic manipulations and can be found in Appendix \ref{app:thm1_proof}.
\end{proof}
\begin{theorem}\label{thm:unident}
Let $\ee$ denote the probability distribution of $\setx^e$ when the error probability of all the neighbors of leaf nodes is non-zero. 
For any $T'\in \T$,
there exists a set of random variables $\mathcal{X}'$ with conditional independence given by $T'$ and a corresponding error probability vector $\mathbf{\hat{q}}$ such that $\ep = \ee$ where $\ep$ denotes the noisy distribution of $\mathcal{X}'$.
\end{theorem}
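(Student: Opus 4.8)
The plan is to prove Theorem~\ref{thm:unident} by explicit construction. Fix $T'\in\T$. By the definition of the equivalence class, $T'$ differs from $T^*$ only by permuting, within each "leaf cluster" $\mathcal{L}_{a_i}$, which node plays the role of the internal node $a_i$ and which nodes are the attached leaves. So it suffices to analyze a single cluster in isolation: let $a$ be a neighbor of leaf nodes in $T^*$ with attached leaves $\ell_1,\dots,\ell_r$, and suppose in $T'$ the node $\ell_1$ (say) becomes internal while $a$ becomes a leaf attached to $\ell_1$, with $\ell_2,\dots,\ell_r$ still attached to $\ell_1$. Because the rest of the tree is untouched and $a$ (resp.\ $\ell_1$) connects to the rest of $T^*$ (resp.\ $T'$) through a single edge, everything reduces to specifying, for each such cluster, edge correlations $\rho'_{\cdot,\cdot}$ on $T'$, means, and an error vector $\hat{\mathbf q}$ so that the resulting noisy law matches $\ee$ on that cluster and agrees with $\ee$ on the cut edge leading into the rest of the tree.

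The key computational tool is already in the excerpt: for a noisy distribution, $\E{X_i^e}=(1-2q_i)\E{X_i}$ and $\Sigma'_{i,j}=(1-2q_i)(1-2q_j)\Sigma_{i,j}$, together with the correlation-decay identity (Lemma~\ref{lemma:corr_decay}) and Equation~\eqref{eq:quadratic}, which, given the conditional independence structure we want to impose (namely that all other cluster nodes are independent conditioned on the chosen internal node), pins down the implied error probability of that internal node from quantities measurable from $\ee$. Concretely, I would: (i) use the star/non-star categorization established in the proof of Theorem~\ref{thm:lim_unident} to note that every observable 4-node pattern is identical whether the data come from $T^*$ or from any $T'\in\T$ — this is exactly why the two are indistinguishable; (ii) for the chosen representative $T'$, write down the noisy second moments $\Sigma'_{i,j}$ for all pairs in terms of the target distribution $\ee$'s moments; (iii) invert these relations to solve for the $\hat q_i$ of the cluster nodes and the underlying (noiseless) correlations on the edges of $T'$, using Equation~\eqref{eq:quadratic} to get the internal node's error probability and the remaining relations to get the leaves'; (iv) verify that the recovered $\hat q_i\in[0,q_{\max}]$ and that the recovered noiseless correlations satisfy Assumption~\ref{as:bounded_corr}, i.e.\ lie in $(\rho_{\min},\rho_{\max})$ in absolute value and are consistent with a genuine tree-structured Ising model on $T'$ (positive-definiteness / valid Ising parametrization on a tree, which for trees reduces to each edge correlation having absolute value $<1$ and each mean $|\E{X_i}|<1$); (v) observe that since all pairwise moments match and both distributions are tree-structured Ising models after removing the noise, and an Ising model on a tree is determined by its pairwise marginals, the full joint noisy distributions coincide.

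The step I expect to be the main obstacle is (iv) together with the final "full distribution, not just pairwise moments" claim in (v). For the first: one must check the inversion actually lands inside the feasible region — in particular that the $\hat q_i$ come out in $[0,0.5)$ and the implied edge correlations stay bounded away from $0$ and $1$; this is where Assumptions~\ref{as:bounded_mean}--\ref{as:p_e_bound} and the restriction in the theorem statement that all neighbors of leaves have \emph{nonzero} error probability get used (nonzero error on $a$ is what gives $\ell_1$ room to absorb a nonzero error and still reproduce the same observed correlations — if $q_a=0$ the swap would force a degenerate solution). For the second: a tree-structured Ising model with $\pm1$ variables is \emph{not} determined by pairwise marginals in general, so one cannot simply invoke moment-matching; instead I would argue distribution equality directly from the conditional-independence factorization — the noisy law factors along $T'$ only after conditioning, but more carefully, one shows the joint law of $\setx^e$ is itself determined by the tree $T^*$, the edge correlations, the means, and $\mathbf q$ via an explicit formula (each noisy node is a BSC applied to the corresponding noiseless node, and the noiseless law is the tree Ising model which \emph{is} determined by its node means and edge correlations), so matching all these parameters for the $T'$ construction forces the noisy joint distributions to be identical. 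Finally, I would assemble the per-cluster constructions into a single global $\setx'$ and $\hat{\mathbf q}$, noting the clusters are edge-disjoint and interact with the rest of the tree only through the preserved cut edges, so the local matchings glue to a global match $\ep=\ee$.
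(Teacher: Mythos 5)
Your proposal follows essentially the same route as the paper's proof: an explicit per-leaf-swap construction of $\hat{\mathbf q}$ (the paper sets $\hat q = 0$ for the old internal node and solves the quadratic of Equation~(\ref{eq:quadratic}) for the new internal node's error, exactly your step (iii)), verification of the correlation-decay identities $\hat{\Sigma}_{i,j}\hat{\Sigma}_{k,k}=\hat{\Sigma}_{i,k}\hat{\Sigma}_{k,j}$ to enforce the conditional independences of $T'$, and gluing over the edge-disjoint leaf clusters. One remark: your diagnosis of where the nonzero-noise hypothesis enters is correct (with $q_a=0$ the swap forces the implied edge correlation between $\ell_1$ and $a$ to have absolute value $1$, a degenerate model), but your resolution of step (v) as written --- ``matching all these parameters forces the noisy joints to be identical'' --- is circular, since the two parameter tuples are deliberately different and what must be checked is that they induce the same noisy law beyond second moments; that said, the paper's own proof also only verifies first and second moments, so on this point you are at parity with it.
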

We prove this theorem by explicit calculation of $\mathbf{\hat{q}}$. We utilize Lemma \ref{lemma:corr_decay} to enforce the conditional independence relations in any tree $T'\in \T$. The proof is included in Appendix \ref{app:thm2_proof}.

Interestingly these unidentifiability results for noisy tree structured Ising models match the ones for noisy tree structured Gaussian graphical models proposed in \cite{pmlr-v97-katiyar19a} inspite of them being graphical models on different class of random variables. 
We remark that the algorithm they propose to learn the equivalence class in \cite{pmlr-v97-katiyar19a} is in the infinite sample domain and does not provide sample complexity results. 
In the next section we develop an efficient algorithm to learn the equivalence class using samples scaling logarithmically in the number of nodes.

\section{Algorithm}\label{sec:algo}
In this section, we provide a detailed description of our algorithm (Algorithm \ref{alg:full_algo}) that can recover the set $\T$ from noisy samples. 
We emphasize that the edges we learn in this algorithm are from any one tree from the equivalence class $\T$ and not necessarily from $T^*$ (as we have demonstrated, identifying $T^*$ itself is not possible). We illustrate our algorithm (as well as each subroutine) using a toy example (Figure \ref{fig:example}).


\begin{minipage}{\textwidth}
\begin{minipage}{.45\textwidth}
\begin{algorithm}[H]
    \centering
    \caption{Algorithm to learn the equivalence class $\T$ of the tree}\label{alg:full_algo}
    \footnotesize
\begin{algorithmic}[1]
\Procedure{FindTree}{}
\For{$X_i$ in $\mathcal{X}$}
\State{$EC\gets \fa(X_i, \setx\setminus X_i)$}
\If{$|EC|$ \textgreater 1}
\State{break}
\EndIf
\EndFor
\State{$\mathcal{X}_{last}\gets EC$}
\If{$|\mathcal{X}\setminus \mathcal{X}_{last}| > 0$}
\State{\textsc{Recurse}($X_i, EC[0], \mathcal{X}_{last}$)}
\EndIf
\EndProcedure
\end{algorithmic}
\end{algorithm}
\end{minipage}
\hfill
\begin{minipage}{.5\textwidth}
\begin{algorithm}[H]
    \centering
\captionof{algorithm}{Algorithm to recursively find all the edges within a subtree}\label{alg:recurse}
    \footnotesize
\begin{algorithmic}[1]
\Procedure{Recurse}{$X_i,X_{leaf}, \setx_{last}$}
\State{$subtrees\gets\fb(X_i, X_{leaf}, \setx_{last})$}
\For{$subtree$ in $subtrees$}
\State{$EC \gets \fa(X_i, subtree)$}

\State{$\setx_{last}' = \setx_{last}\cup subtrees\setminus subtree \cup EC$}
\State{$\textsc{Recurse}(EC[0], X_i, \setx_{last}')$}
\EndFor
\EndProcedure
\end{algorithmic}
\end{algorithm}
\end{minipage}
\end{minipage}

We first introduce three notions which we extensively use for the algorithm - (i) Equivalence Clusters, (ii) Proximal Sets - $\setp{1}, \setp{2}$, (iii) Categorizing a set of 4 nodes as star/non-star using finite samples.
\paragraph{Equivalence Cluster:} As illustrated in Figure \ref{fig:example}(a), an \textbf{equivalence cluster} is defined as a set containing an internal node and all the leaf nodes connected to it. We denote the equivalence cluster containing a node $X_i$ by $EC(X_i)$.
\paragraph{Defining proximal sets:}
Due to correlation decay, the correlation between an arbitrary pair of nodes can go down exponentially in the number of nodes which would lead to exponential sample complexity. To avoid this, we only consider nodes with correlations above a constant threshold while making star/non-star categorization. We define two proximal sets for each node $X_i$-$\setp{1}$ and $\setp{2}$, where $\setp{1}$ is given by:
\begin{equation*}
    \setp{1} = \left\{X_j: \Sigma'_{i,j} \geq t_1\right \}.
\end{equation*}
In the above expression, we set $t_1 = (1-2q_{max})^2(1-\mu_{max}^2)\rho_{min}^4$, thereby guaranteeing that the set contains at least all the nodes within a radius of 4 of that node (on the true unknown graph). This is because the minimum variance of any node is $\sqrt{(1-\mu_{max}^2)}$, the minimum correlation of any 2 nodes at distance 4 in the absence of noise is $\rho_{min}^4$ and the noise can decrease the covariance by a factor of at most $(1-2q_{max})^2$.

\begin{wrapfigure}{r}{0.6\textwidth}\vspace{-6pt}
\centering
    \includegraphics[scale = 0.35]{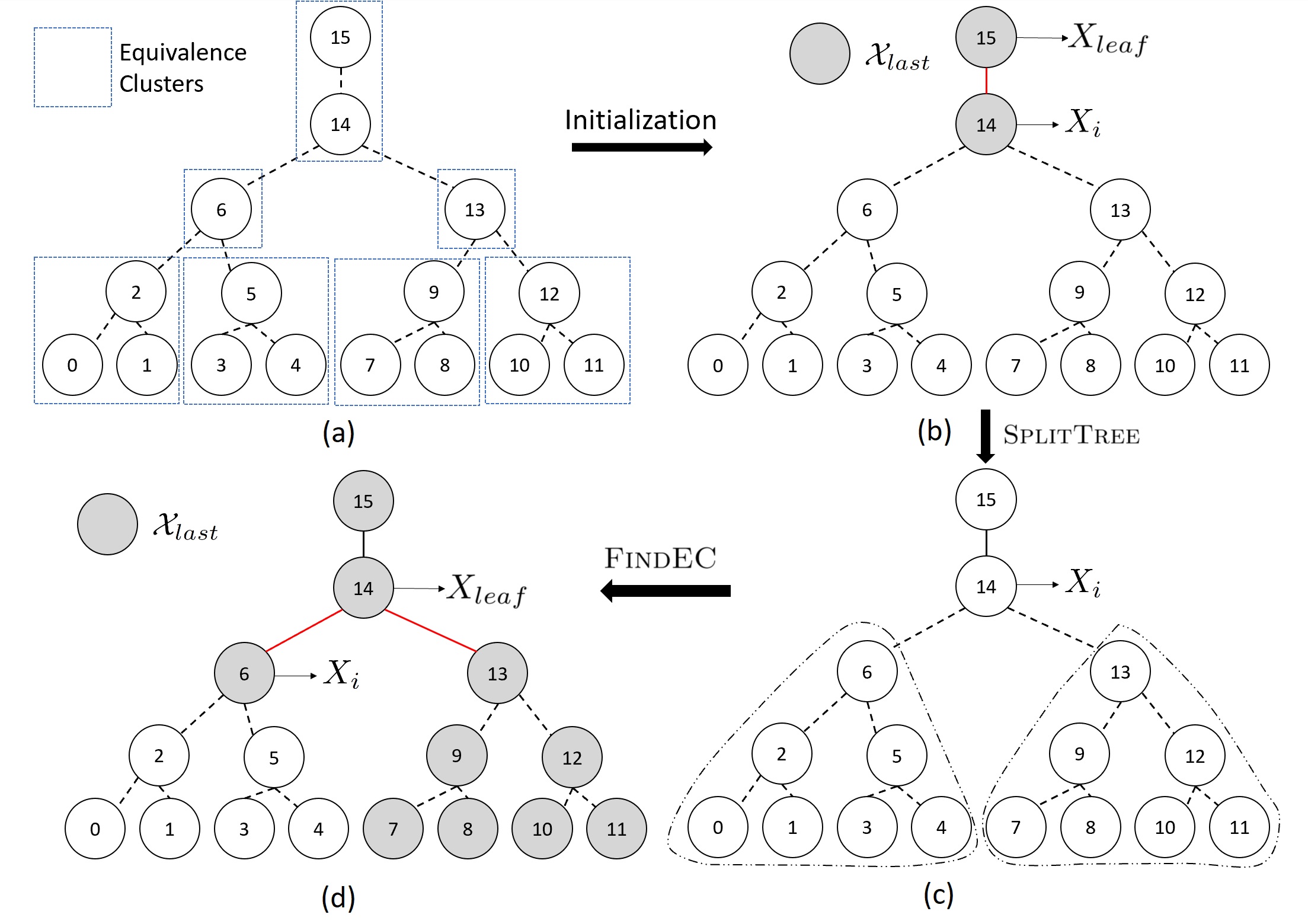}
    \caption{Illustration of the algorithm.}
    \label{fig:example}
\end{wrapfigure}
It is possible that $X_j\in \setp{1}$ even if $X_j$ and $X_i$ are not neighbors in the true graph. We define the second set $\setp{2}$ to guarantee that in this case, at least the first node from the path from $X_i$ to $X_j$ is in $\mathcal{P}_j^2$. Using the correlation decay property, the noisy covariance expression,, and the bounds $1-\mu_{max}^2\leq\Sigma_{ii}\leq 1$, $1-2q_{max}\leq 1-2q_i\leq 1$ and $\rho_{i,j}\leq\rho_{max}$ this set is given by:
\begin{equation*}
    \setp{2} = \left\{X_j: \Sigma'_{i,j} \geq t_2 \right \},
\end{equation*}
where $t_2 = \min\left\{ t_1, \frac{t_1(1-2q_{max})\sqrt{1-\mu_{max}^2}}{\rho_{max}}\right\}$.
Using the sample covariance values we construct sets $\tilde{\setp{1}}$ and $\tilde{\setp{2}}$ to include at least all the nodes in $\setp{1}$ and $\setp{2}$ with high probability. We denote the sample covariance between nodes $X^e_i$ and $X^e_j$ obtained from the noisy observations by $\tilde{\Sigma}_{i,j}$. The sets $\tilde{\setp{1}}$ and $\tilde{\setp{2}}$ are defined as:
\begin{equation*}
    \tilde{\setp{1}} = \left\{X_j^e: \tilde{\Sigma}_{i,j} \geq 0.5t_1\right \},
\end{equation*}
\begin{equation*}
    \tilde{\setp{2}} = \left\{X_j^e: \tilde{\Sigma}_{i,j} \geq 0.5t_2\right \}.
\end{equation*}
\paragraph{Classifying into star/non-star shape:}
It is easy to see (refer Equation(\ref{eq:non_star_ratios}) in Appendix \ref{app:star_non_star_cond}) that when any set of 4 nodes $\{X_1, X_2, X_3, X_4\}$ forms a non-star such that $(X_1, X_2)$ from a pair, we have:
\begin{equation*}
    \dfrac{\corr{1}{3}'\corr{2}{4}'}{\corr{1}{2}'\corr{3}{4}'} \leq \rho_{max}^2,
    \dfrac{\corr{1}{3}'\corr{2}{4}'}{\corr{1}{4}'\corr{2}{3}'} = 1.
\end{equation*}
Therefore, using the finite sample estimate of the noisy correlation, the categorization of  4 nodes $\{X_1, X_2, X_3, X_4\}$ into star/non-star such that $(X_1, X_2)$ from a pair is given in Table \ref{tab:my_label}.

\aboverulesep=0ex
 \belowrulesep=0ex
 \setlength{\tabcolsep}{20pt}
 \renewcommand{\arraystretch}{2}

\noindent
We first describe the key steps of the algorithm using the example from Figure \ref{fig:example}.
\paragraph{Initialization:}
\begin{wraptable}{r}{0.6\textwidth}
\centering
 \begin{small}
    \begin{tabular}{|c|c|c|}\toprule
        & \\[-5.5ex]
        Category &$\dfrac{\tilde{\rho}_{1,3}\tilde{\rho}_{2,4}}{\tilde{\rho}_{1,4}\tilde{\rho}_{2,3}}$ &$\dfrac{\tilde{\rho}_{1,3}\tilde{\rho}_{2,4}}{\tilde{\rho}_{1,2}\tilde{\rho}_{3,4}}$\\[1.5ex] \midrule
        & \\[-5.5ex]
        Non-star &$ > \dfrac{(1+\rho_{max}^2)}{2}$ &$ < \dfrac{(1+\rho_{max}^2)}{2}$ \\[1.5ex] \midrule
        & \\[-5.5ex]
        Star     &$ > \dfrac{(1+\rho_{max}^2)}{2}$ & $ > \dfrac{(1+\rho_{max}^2)}{2}$\\[1.5ex] \bottomrule
    \end{tabular}
    \caption{Star/Non-star categorization}
    \label{tab:my_label}
 \end{small}
\end{wraptable}
The algorithm starts by learning edges of any leaf node (edge(14,15)). It searches for a leaf node by looking for equivalence cluster with more than one nodes. This is achieved using the $\fa$ subroutine which, given a node, can find all the nodes in its equivalence cluster. 
\paragraph{Recursion:}
After finding a pair of a leaf and an internal node, the algorithm calls the $\fc$ to find all the remaining edges recursively. This is done in 2 steps: (i) Split the  nodes in the proximal set of the internal node into different subtrees connected to the internal node using $\fb$ (subtree1 = nodes 0-6, subtree2 = nodes 7-13), (ii) Find the nodes in each subtree which have an edge with the internal node. This is done by the simple observation that when we consider the internal node along with one such subtree, the internal node acts like a leaf node for the tree on just these nodes (node 14 is a leaf node for the tree on 1,2,3,4,5,6,14). Hence, $\fa$ can be used on these subset of nodes. Now, we have a pair of a leaf node and an internal node on a subset of nodes and we call the $\fc$ just on this subset of nodes.

The runtime of the algorithm is $\Or(n^3)$.

\subsubsection*{Description of subroutines}
\paragraph{$\fb$}
This function takes in a pair of leaf and internal nodes, $X_{leaf}$ and $X_i$ respectively, from a subtree. To ensure that we only consider nodes in the subtree, it also takes in a set $\setx_{last}$ which contains the nodes assigned to other subtrees in the previous recursive call. It splits the nodes from the common proximal sets  $\tilde{\mathcal{P}^1}_{leaf}$ and $\tilde{\setp{1}}$ in the subtree into smaller subtrees that have an edge with $X_i$. The property used is that 2 nodes $X_{k_1}$ and $X_{k_2}$ lie in the same smaller subtree if and only if $\{X_i, X_{leaf}, X_{k_1}, X_{k_2}\}$ form a non-star such that nodes $(X_{k_1}, X_{k_2})$ form a pair. To make sure that no node outside the subtree is considered, we do not split any node within $\setx_{last}$ or which pairs with a node from $\setx_{last}$ when considered along with $X_{leaf}$ and $X_i$. This is an $\Or(n^2)$ operation.
\paragraph{$\fa$}
This function takes in a node $X_i$ and a set of nodes $\setx_{sub}$ as inputs and returns all the nodes of $EC(X_i)$ from $\setx_{sub}\cup X_i$. Two nodes are in the same equivalence cluster if any categorization of a set of 4 nodes as star/non-star either results in a star or a non-star where these nodes form a pair. Essentially, for all the nodes in the proximal set of $X_i$, we verify if they satisfy this condition. We also add edges by arbitrarily choosing any node from $EC(X_i)$ as the center node as we are only interested in recovering $\T$. This is an $\Or(n^2)$ operation.

\paragraph{$\fc$}
This function takes the same arguments as $\fb$ and calls $\fb$ with those arguments to obtain the different smaller subtrees. It then uses $\fa$ to find the edges between $X_i$ and these smaller subtrees. When the smaller subtrees are considered along with $X_i$, $X_i$ acts as their leaf node and the node $X_i$ has an edge with, acts as an internal node. For each of the smaller subtree, it adds the nodes of the remaining smaller subtrees in $\setx_{last}$ along with the equivalence cluster from the current subtree which had an edge with $X_i$ as that has already been learnt. It then calls the recurse function with these smaller subtrees.

\paragraph{Need for 2 proximal sets $\tilde{\setp{1}}$ and $\tilde{\setp{2}}$:}
Consider the implementation of $\fa$. For each node $X_j\in \tilde{\setp{1}}$, we check if it lies in $EC(X_i)$. For all pairs of nodes $X_{k_1}, X_{k_2}\in \tilde{\setp{1}}\cap \tilde{\mathcal{P}^1_j}$,  we could check whether the condition for  $X_j\in EC(X_i)$ is satisfied. 
If $X_j\not \in EC(X_i)$ but no node on the path from $X_i$ to $X_j$ is in $\tilde{\setp{1}}\cap \tilde{\mathcal{P}^1_j}$, we would incorrectly conclude that $X_j\in EC(X_i)$. To avoid this corner case, instead of considering $X_{k_1}, X_{k_2}\in \setp{1}\cap \tilde{\mathcal{P}^1_j}$, we consider $X_{k_1}, X_{k_2}\in \tilde{\setp{2}}\cap \tilde{\mathcal{P}^2_j}$. By the definition of $\tilde{\setp{2}}$, $\tilde{\setp{2}}\cap \tilde{\mathcal{P}^2_j}$ contains a node from the path from $X_i$ to $X_j$. This results in $\{X_i, X_j, X_{k_1}, X_{k_2}\}$ forming a non-star where $(X_i, X_j)$ do not form a pair and hence it correctly concludes that $X_j\not \in EC(X_i)$.

$\tilde{\setp{2}}$ plays a crucial role in $\fb$ too. Consider a recursive call to $\fc$ such that in the previous recursion, nodes in $\tilde{\setp{1}}\cap \tilde{\mathcal{P}^1}_{leaf}$ of that step were added to $\setx_{last}$. However, in the current step there might be nodes in $\tilde{\setp{1}}\cap \tilde{\mathcal{P}^1}_{leaf}$ from other subtrees which are not in $\setx_{last}$. To ensure that any such node $X_k$ is not considered while creating smaller subtrees, we consider the nodes $\tilde{\mathcal{P}_k^2}\cap \setx_{last}$ and ensure that they don't pair with $X_k$ when considered with the present recursive call's $X_i$ and $X_{leaf}$.

\paragraph{Setting the radius as 4:}
The radius is set as 4 to make sure that each node is considered with at least its nearest neighboring nodes when classified as star/non-star. This would ideally require a radius of 3 but to account for the unidentifiability between a leaf node and its neighbor, we consider a radius of 4.

The detailed description of the complete algorithm including these subroutines- their pseudo-code, proof of correctness and time complexity analysis is presented in Appendix \ref{app:algo_details}.

Next, we present the sample complexity result of our algorithm.
\begin{theorem} 
Consider an Ising model on $n$ nodes whose graph structure is the tree $T^*$ and it satisfies assumptions \ref{as:bounded_mean} and \ref{as:bounded_corr}. We get noisy samples from this Ising model where samples from each node are flipped independently with unknown, unequal probability satisfying assumption \ref{as:p_e_bound}. Algorithm \ref{alg:full_algo} correctly recovers $\T$ with probability at least $1-\tau$ if the number of samples $m$ is lower bounded as follows:
\begin{equation*}
    m \geq \frac{128}{\delta^2}\log\left(\frac{6n^2}{\tau}\right)
\end{equation*}
where $\delta = \frac{t_2^3(1-t_3)}{128}$, $t_2 = \min\left\{t_1, \frac{t_1(1-2q_{max})\sqrt{1-\mu_{max}^2}}{\rho_{max}}\right\}$, $t_1 = (1-2q_{max})^2(1-\mu_{max}^2)\rho_{min}^4$, $t_3 = \frac{1+\rho_{max}^2}{2}$.

\end{theorem}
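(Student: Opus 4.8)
The plan is to show that a finite-sample run of Algorithm~\ref{alg:full_algo} differs from its infinite-sample (population) counterpart only through estimation error in the pairwise covariances, and that the stated sample size makes every such estimate accurate enough that no decision the algorithm takes can flip. Since correctness of the subroutines $\fa$, $\fb$, $\fc$ when $\Sigma'$ is known exactly follows from their analysis in Appendix~\ref{app:algo_details} together with the star/non-star characterization in the proof of Theorem~\ref{thm:lim_unident}, it suffices to control the good event
\begin{equation*}
\mathcal{G} = \bigl\{\, |\tilde{\Sigma}_{i,j} - \Sigma'_{i,j}| \le \delta \ \text{ for all } i,j\in[n] \,\bigr\}.
\end{equation*}

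First I would establish $\Pr[\mathcal{G}] \ge 1-\tau$. Each $\tilde{\Sigma}_{i,j}$ is a fixed function of the three empirical means $\tfrac1m\sum_t X_{i,t}^e X_{j,t}^e$, $\tfrac1m\sum_t X_{i,t}^e$, $\tfrac1m\sum_t X_{j,t}^e$, each an average of i.i.d.\ $[-1,1]$-valued terms. Writing $\tilde{\Sigma}_{i,j}-\Sigma'_{i,j}$ as a sum of the three centered averages (using $|\overline{X^e_i}|\le 1$), applying Hoeffding's inequality to each with deviation a small constant fraction of $\delta$, and taking a union bound over the at most $\binom n2$ pairs gives $\Pr[\mathcal{G}^c] \le 6n^2\exp(-m\delta^2/128)$, which is at most $\tau$ exactly when $m \ge \tfrac{128}{\delta^2}\log(6n^2/\tau)$; the constants $128$ and $6n^2$ absorb the split into three means and the slack taken in the per-mean deviation.

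Working deterministically on $\mathcal{G}$, I would then verify that every decision the algorithm makes is the correct one. These are of three kinds: (i) membership in $\tilde{\setp{1}}$ and $\tilde{\setp{2}}$; (ii) classification of a $4$-tuple as star/non-star via Table~\ref{tab:my_label}; and (iii) the ``same equivalence cluster'' and ``same subtree'' tests inside $\fa$ and $\fb$, which are boolean combinations of (ii). For (i): since $\delta<\tfrac12 t_2\le\tfrac12 t_1$, every node with $\Sigma'_{i,j}\ge t_k$ has $\tilde{\Sigma}_{i,j}\ge t_k-\delta\ge\tfrac12 t_k$, so $\tilde{\setp{1}}\supseteq\setp{1}$ and $\tilde{\setp{2}}\supseteq\setp{2}$; hence the radius-$4$ coverage and the ``path node'' property of $\setp{2}$ relied on in the subroutine proofs are inherited by the sample sets, while any node that is retained still has $\Sigma'_{i,j}\ge\tfrac12 t_k-\delta>0$ and so lies at bounded graph distance, ruling out spurious far nodes in a $4$-tuple test. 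For (ii): every tested $4$-tuple is drawn from proximal sets, so by correlation decay and the radius-$4$ construction all six noisy correlations appearing in the two ratios of Table~\ref{tab:my_label} are bounded below in absolute value by a positive constant depending only on the model parameters, and all variances lie in $[1-\mu_{max}^2,1]$; hence each sample correlation is within $O(\delta)$ of its population value, and each ratio — a quotient of products of three such correlations, with numerator and denominator of comparable magnitude thanks to the product form of tree correlations — is within $O(\delta)$ of its population value (with implied constant governed by that lower bound). Since the population ratios equal either $1$ or a value $\le\rho_{max}^2$, each is separated from the threshold $t_3$ by at least $1-t_3$, and $\delta=\tfrac{t_2^3(1-t_3)}{128}$ is calibrated so that this perturbation is strictly smaller than $1-t_3$; thus no ratio crosses its threshold and the classification is always correct. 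Combining (i)--(iii), on $\mathcal{G}$ the run of Algorithm~\ref{alg:full_algo} coincides with its population run, which outputs $\T$, so $\T$ is recovered with probability at least $1-\tau$.

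The step that needs the most care is (ii): converting the additive covariance error into the guarantee that both ratios in Table~\ref{tab:my_label} stay on the correct side of $t_3$. This requires an explicit uniform lower bound on \emph{every} pairwise noisy correlation that can appear in a tested $4$-tuple — including ``distant'' pairs such as $(X_{k_1},X_{k_2})$, which one controls through correlation decay and the fact that the radius-$4$ design keeps every tested node within bounded graph distance of the anchor — and then a Lipschitz estimate for a ratio of products of three bounded-below, bounded-above numbers; making both tight enough to justify the stated $\delta$ is exactly where the calibration of $t_1,t_2$ and the radius-$4$ choice do their work. I would also recheck the Hoeffding/union-bound constants so that the bound genuinely closes at $\tfrac{128}{\delta^2}\log(6n^2/\tau)$ rather than only up to universal factors.
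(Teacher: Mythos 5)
Your proposal follows essentially the same route as the paper's Appendix~\ref{app:sample_complexity}: a Hoeffding-plus-union-bound argument over all $O(n^2)$ pairwise noisy covariances to get the good event, followed by a deterministic perturbation (Lipschitz) analysis showing that each ratio in Table~\ref{tab:my_label} moves by at most $O(\delta/t_2^3)$ and hence cannot cross the threshold $t_3$, with correctness of $\fa$, $\fb$, $\fc$ then inherited from the population-level analysis. Your added observations — that $\tilde{\setp{1}}\supseteq\setp{1}$ and $\tilde{\setp{2}}\supseteq\setp{2}$ on the good event, and that the separation of the population ratios from $t_3$ is exactly $1-t_3$ — are consistent with (and if anything slightly cleaner than) the paper's calibration of $\delta=\tfrac{t_2^3(1-t_3)}{128}$.
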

The proof is included in Appendix \ref{app:sample_complexity}.

\paragraph{Running the algorithm in absence of the knowledge of $\rho_{max}$ and $\rho_{min}$:}
When we only have access to noisy samples, $q_{max}$ and $\mu_{max}$, we can set $\rho_{min} = \epsilon$ and $\rho_{max} = 1-\epsilon$ and solve for $\epsilon$ given an error budget $\tau$. This would be a conservative estimate of $\epsilon$ and in practice, the algorithm would work for lower values of $\epsilon$.
\section{Experiments}\label{app:experiments}\label{sec:exp}

\subsection*{Experimental Setup}
The probability distribution of Ising models is represented by 
\begin{equation*}
    \mathbb{P}(\bs{{X}} = \bs{x})\propto \exp(\bs{x}^\top \bs{W} \bs{x}/2 + \bs{b}^\top\bs{x}),
\end{equation*}
where $\bs{W}$ is the symmetric weight matrix with 0 on the diagonal and $\bs{b}$ is the bias vector. The support of $\bs{W}$ defines the Ising model structure. Therefore, a chain structured Ising model with nodes labeled consecutively satisfies $W_{i,j}\neq 0$ if and only if $|i-j| = 1$. A star structured Ising model with node 1 as the internal node satisfies $W_{i,j}\neq 0$ if and only if $i = 1, j\neq 1$ or $j = 1, i\neq 1$.
\begin{wrapfigure}{r}{0.5\textwidth}\vspace{-0.5cm}
\centering
    \includegraphics[height=50mm, width=0.48\textwidth]{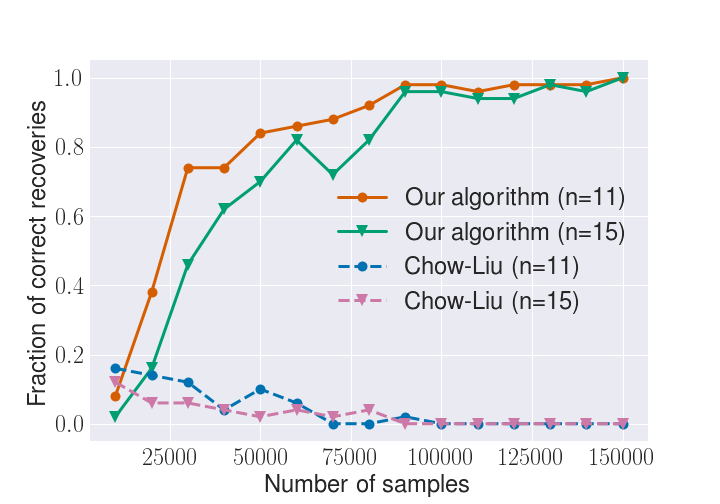}
    \caption{$ W_{min}=0.7,~ W_{max}=1.2, ~q_{max}=0.15$. Comparing the performance of Chow-Liu and our algorithm over 50 runs for increasing number of samples.}
    \label{fig:plots1}
\end{wrapfigure}
We sample each non-zero element of $\bs{W}$ uniformly from $[W_{min}, W_{max}]$. The probability of error for each node is sampled uniformly from $[0, q_{max}]$.

Comparison with the Chow-Liu algorithm is presented in \ref{subsec:our_vs_CL}. In subsection \ref{subsec:our_vs_spars}, we demonstrate the performance of our algorithm as compared to the Sparsitron algorithm. 
Subsections \ref{subsec:q_max}, \ref{subsec:W_max} and \ref{subsec:W_min} report the impact of the maximum error $q_{max}$, maximum edge weight $W_{max}$ and minimum edge weight $W_{min}$ respectively on the algorithm performance. All of these experiments are done for chain structured Ising models and have $\bs{b} = 0$.

We next demonstrate the performance of our algorithm on star-structured Ising model in subsection \ref{subsec:star} as the number of nodes increases. To begin with, we stick to $\bs{b} = 0$.

Finally we study the impact of $\bs{b}$ (the non-zero mean) on the algorithm performance for both star-structured Ising model and chain structured Ising model in subsection \ref{subsec:b}.
\begin{wrapfigure}{r}{0.5\textwidth}\vspace{-0.5cm}
    \centering
    \includegraphics[scale = 0.3]{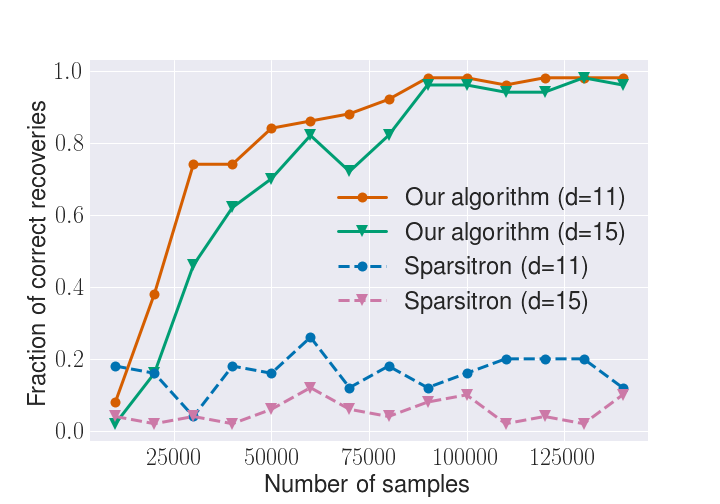}
    \caption{$ W_{min}=0.7, W_{max}=1.2, q_{max}=0.15$. Comparing the performance of Sparsitron and our algorithm over 50 runs for increasing number of samples.}
    \label{fig:plot_spars}\vspace{-1.5cm}
\end{wrapfigure}
\subsection{Our Algorithm vs Chow-Liu}\label{subsec:our_vs_CL}
Figure \ref{fig:plots1} depicts that as the number of samples increases, the fraction of correct recoveries using our algorithm approaches 1. Chow-Liu incorrectly converges to a tree not in the equivalent class $\T$ due to unequal noise in the nodes. 

\subsection{Our Algorithm vs Sparsitron}\label{subsec:our_vs_spars}
\begin{wrapfigure}{r}{0.5\textwidth}\vspace{-0.5cm}
    \centering
    \includegraphics[scale = 0.3]{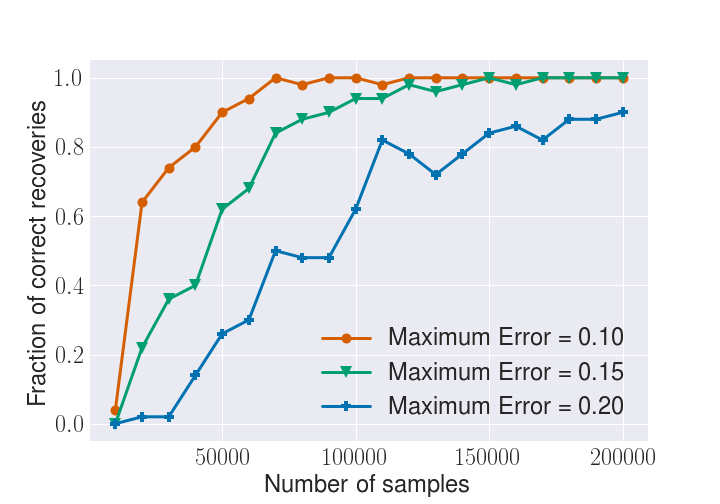}
    \caption{$ W_{min}=0.7,~ W_{max}=1.2$. The performance of our algorithm on a chain of 15 nodes over 50 runs for increasing number of samples for different values of $q_{max}$. }
    \label{fig:plot_q_max}\vspace{-0.4cm}
\end{wrapfigure}
We set $ W_{min}=0.7, W_{max}=1.2, q_{max}=0.15, \bs{b} = 0$.
In Figure \ref{fig:plot_spars}, we compare the performance of our algorithm with the sparsitron algorithm for chain-structured Ising model on 15 nodes. To evaluate the sparsitron algorithm, we take the output weight matrix and find the maximum weight spanning tree. We call the algorithm a success if this tree is from the equivalence class $\T$. We can see that the sparsitron has a low success rate.

\subsection{Effect of the Maximum Error Probability} \label{subsec:q_max}
We set $ W_{min}=0.7, W_{max}=1.2, \bs{b} = 0$ and vary $q_{max}$ to take the values $\{0.1, 0.15, 0.20\}$. We evaluate on chain structured Ising model on 15 nodes. Figure \ref{fig:plot_q_max} illustrates the effect of $q_{max}$ on the performance of our algorithm. As expected, increase in the probability of error makes it harder to recover the equivalence class.
\vspace{-0.5cm}


\vspace{-0.3cm}
\subsection{Effect of the Maximum Weight} \label{subsec:W_max}
Next, we look at the effect of $W_{max}$. We set $ W_{min}=0.7, q_{max}=0.15, \bs{b} = 0$ and vary $W_{max}$ to take the values $\{1.0, 1.2, 1.4\}$. We evaluate on chain structured Ising model on 15 nodes. 
Intuitively, a high $W_{max}$ makes it difficult to differentiate between a star and a non-star, therefore the algorithm is expected to perform better for lower $W_{max}$. This is indeed what happens as shown in Figure \ref{fig:plot_W_max}.
\begin{minipage}{\textwidth}
\begin{minipage}{0.45\textwidth}
\begin{figure}[H]
    \centering
    \includegraphics[scale = 0.3]{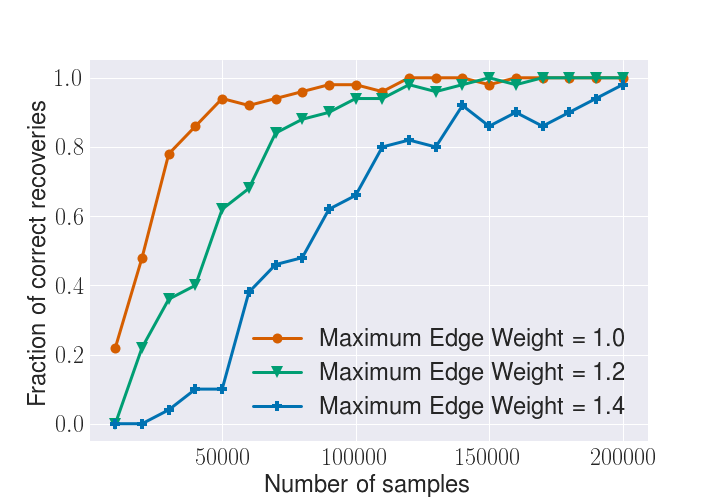}
    \caption{$ W_{min}=0.7, ~q_{max}=0.15$. The performance of our algorithm  on a chain of 15 nodes over 50 runs for increasing number of samples for different values of $W_{max}$. }
    \label{fig:plot_W_max}
\end{figure}
\end{minipage}
\begin{minipage}{0.03\textwidth}
~
\end{minipage}
\begin{minipage}{0.45\textwidth}
\begin{figure}[H]
    \centering
    \includegraphics[scale = 0.3]{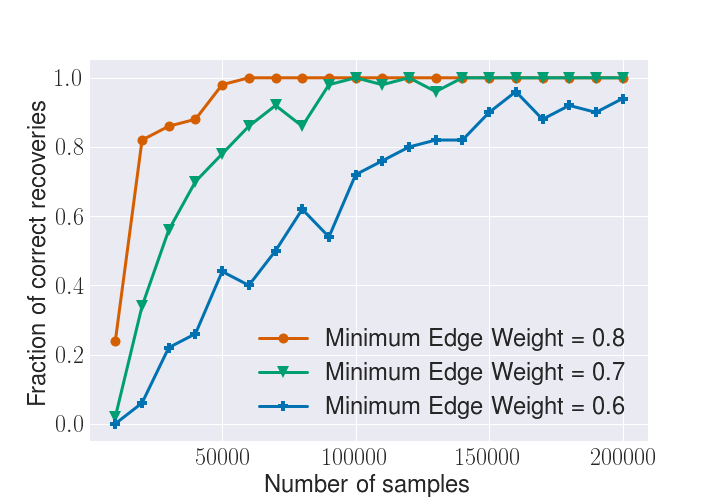}
    \caption{$ W_{max}=1.1, ~q_{max}=0.15$. The performance of our algorithm  on a chain of 15 nodes over 50 runs for increasing number of samples for different values of $W_{min}$. }
    \label{fig:plot_W_min}
\end{figure}
\end{minipage}
\end{minipage}

\subsection{Effect of the Minimum Weight}\label{subsec:W_min}

We also look at the effect of $W_{min}$. We set $ W_{max}=1.1, q_{max}=0.15, \bs{b} = 0$ and vary $W_{min}$ to take the values $\{0.6, 0.7, 0.8\}$. We evaluate on chain structured Ising model on 15 nodes. We can expect that smaller edge weights make it difficult to accurately estimate the correlation values resulting in higher errors. This is illustrated in Figure \ref{fig:plot_W_min}.
\vspace{-0.3cm}
\subsection{Algorithm performance for Star-structured tree Ising Model} \label{subsec:star}


Till now all the experiments have used chain structured Ising models. In this subsection, we consider the other extreme tree structure-star structure with one internal node and the remaining leaf nodes connected to the internal node. We set $ W_{min}=0.7, W_{max}=1.2, q_{max}=0.1, \bs{b} = 0$. We consider three different graph sizes - 11 nodes, 13 nodes and 15 nodes. The performance of the algorithm is illustrated in Figure \ref{fig:res_star}.
\begin{wrapfigure}{r}{0.50\textwidth}\vspace{-0.5cm}
    \centering
    \includegraphics[scale = 0.3]{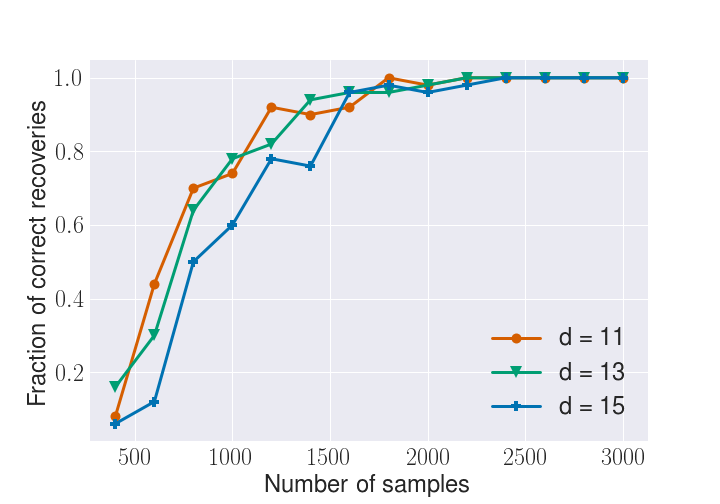}
    \caption{$ W_{min} = 0.7, W_{max}=1.2, ~q_{max}=0.1$. The performance of our algorithm  on a star of 11, 13 and 15 nodes over 50 runs for increasing number of samples. }
    \label{fig:res_star}\vspace{-1.2cm}
\end{wrapfigure}
As we can see, compared to the chain structured Ising models, star-structured Ising models require much smaller number of samples. This is due to the small radius which results in high absolute values of the correlations. Also, the performance is only slightly impacted by the number of nodes which can also be attributed to the small radius.

\subsection{Effect of bias on the algorithm performance.} \label{subsec:b}
We observe a very interesting phenomena when we study the impact of the bias term $\bs{b}$. For chain structured Ising model, the algorithm performs better for lower absolute values in $\bs{b}$. However, for star structured Ising modes, the algorithm performs better for higher absolute values in $\bs{b}$. For both the cases we consider a tree on 11 nodes and set $ W_{min}=0.7, W_{max}=1.2, q_{max}=0.1$. $\mu_{max}$ is estimated empirically. We set all the entries in $\bs{b}$ to be equal. 

For chain structured Ising model, we consider 3 cases - (i) all the entries in $\bs{b}$ = 0.0, (ii) all the entries in $\bs{b}$ = 0.02 and (iii) all the entries in $\bs{b}$ = 0.04. The result is provided in Figure \ref{fig:bias_chain}.
\begin{minipage}{\textwidth}
\begin{minipage}{0.45\textwidth}
\begin{figure}[H]
    \centering
    \includegraphics[scale = 0.3]{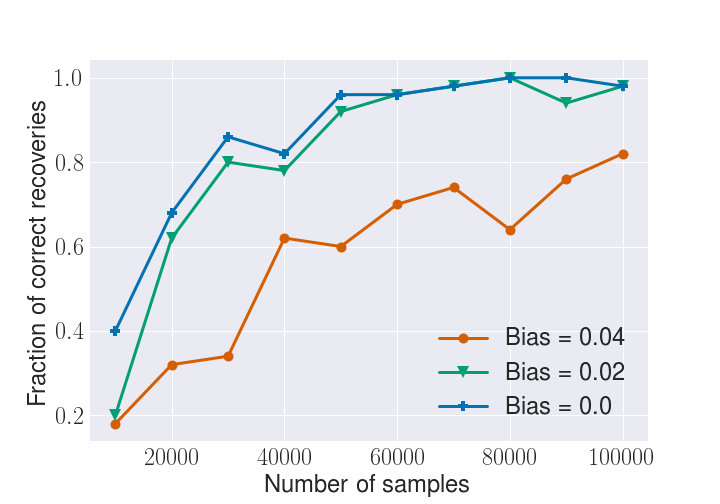}
    \caption{$ W_{min} = 0.7, W_{max}=1.2, ~q_{max}=0.1$. The performance of our algorithm  on a chain of 11 nodes over 50 runs for increasing number of samples for varying bias values. }
    \label{fig:bias_chain}
\end{figure}
\end{minipage}
\begin{minipage}{0.03\textwidth}
~
\end{minipage}
\begin{minipage}{0.45\textwidth}
\begin{figure}[H]
    \centering
    \includegraphics[scale = 0.3]{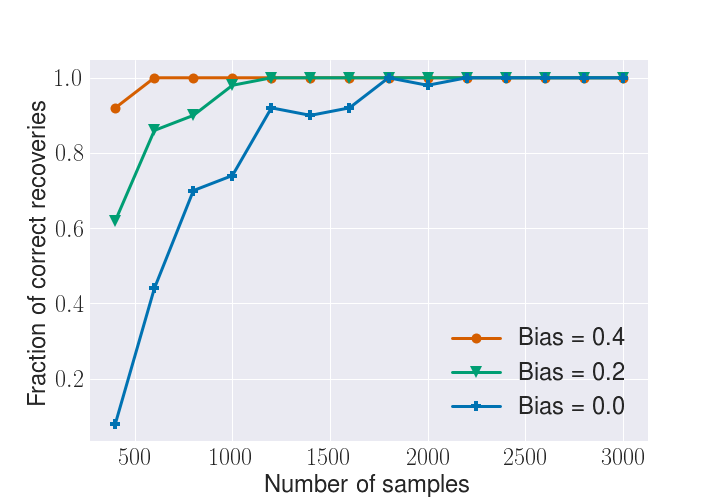}
    \caption{$ W_{min} = 0.7, W_{max}=1.2, ~q_{max}=0.1$. The performance of our algorithm  on a star of 11 nodes over 50 runs for increasing number of samples for varying bias values. }
    \label{fig:bias_star}
\end{figure}
\end{minipage}
\end{minipage}

\vspace{1em}
For star structured Ising model, we consider 3 cases - (i) all the entries in $\bs{b}$ = 0.0, (ii) all the entries in $\bs{b}$ = 0.2 and (iii) all the entries in $\bs{b}$ = 0.4. The result is provided in Figure \ref{fig:bias_star}.

We now give an intuitive justification of the observation.
A higher value of bias results in smaller threshold for the proximal sets. For chain structured Ising models, due to correlation decay, we can expect correlation values for some pairs of random variables to be close to the threshold. Estimating them accurately would require more number of samples.

Star structured Ising models, on the other hand, have higher correlation values due a smaller diameter of 2. A low threshold ensures that no node is mistakenly left behind when constructing the proximal sets. Therefore, the performance is better for a higher bias values.
\appendix
\section{Proof of Lemma \ref{lemma:corr_decay}}\label{app:corr_decay}
We prove this by induction on the number of nodes $k$ in the path $(X_{i_1}\rightarrow X_{i_2} \rightarrow X_{i_3}\dots  \rightarrow X_{i_k})$ for any 2 nodes $X_{i_1}, X_{i_k}$.
\subsubsection*{Base Case $k = 3$:}
The path is $(X_{i_1}\rightarrow X_{i_2} \rightarrow X_{i_3})$, therefore we have $X_{i_1}\perp X_{i_3}|X_{i_2}$. For random variables with a support size of 2, this is true if and only if they are conditionally uncorrelated, that is,
\begin{equation}\label{eq:corr_dec_uncorr}
   \E{X_{i_1}X_{i_3}|X_{i_2}} = \E{X_{i_1}|X_{i_2}}\E{X_{i_3}|X_{i_2}}.
\end{equation}
$\E{X_{i_1}|X_{i_2}}$ is linear in $X_{i_2}$ since the support size of $X_{i_2}$ is 2 and therefore we need to need to fit only 2 points $\E{X_{i_1}|X_{i_2} = 1}$ and $\E{X_{i_1}|X_{i_2} = -1}$ to completely represent the conditional expectation. Therefore the linear least square error (LLSE) estimator of $X_{i_1}$ given $X_{i_2}$ is also the minimum mean squared estimator $\E{X_{i_1}|X_{i_2}}$. Utilizing the standard result for LLSE, we have:
\begin{equation}\label{eq:corr_dec_cond_exp}
    \E{X_{i_1}|X_{i_2}} = \E{X_{i_1}} + \Sigma_{i_1, i_2}\Sigma_{i_2, i_2}^{-1}(X_{i_2} - \E{X_{i_2}}).
\end{equation}
Similarly we have:
\begin{equation}\label{eq:corr_dec_cond_exp1}
    \E{X_{i_3}|X_{i_2}} = \E{X_{i_3}} + \Sigma_{i_3, i_2}\Sigma_{i_2, i_2}^{-1}(X_{i_2} - \E{X_{i_2}}).
\end{equation}
Substituting $\E{X_{i_1}|X_{i_2}}$ and $\E{X_{i_3}|X_{i_2}}$ from Equations (\ref{eq:corr_dec_cond_exp}) and (\ref{eq:corr_dec_cond_exp1}) in Equation (\ref{eq:corr_dec_uncorr}) we get:
\begin{equation*}
\begin{aligned}
    \E{X_{i_1}X_{i_3}|X_{i_2}} =& \E{X_{i_1}}\E{X_{i_3}} + \E{X_{i_1}}\Sigma_{i_3, i_2}\Sigma_{i_2, i_2}^{-1}(X_{i_2} - \E{X_{i_2}}) +\\
    &\E{X_{i_3}}\Sigma_{i_1, i_2}\Sigma_{i_2, i_2}^{-1}(X_{i_2} - \E{X_{i_2}}) +\\
    &\Sigma_{i_1, i_2} \Sigma_{i_3, i_2}(\Sigma_{i_2, i_2}^{-1}(X_{i_2} - \E{X_{i_2}}))^2\\
    \E{X_{i_1}X_{i_3}} =& \E{\E{X_{i_1}X_{i_3}|X_{i_2}}}\\
    =& \E{X_{i_1}}\E{X_{i_3}} + \Sigma_{i_1, i_2} \Sigma_{i_3, i_2}\Sigma_{i_2, i_2}^{-1}.\\
\end{aligned}
\end{equation*}
Therefore we get $\Sigma_{i_1, i_3}\Sigma_{i_2, i_2} = \Sigma_{i_1, i_2}\Sigma_{i_3, i_2}$ which implies $\rho_{i_1i_3} = \rho_{i_1i_2}\rho_{i_2i_3}$.
\subsubsection*{Inductive Case:}
Let the statement be true for any path involving $k$ nodes. For a path $(X_{i_1}\rightarrow X_{i_2} \rightarrow X_{i_3}\dots  \rightarrow X_{i_{(k+1)}})$ we have $X_{i_1}\perp X_{i_{(k+1)}}|X_{i_k}$. Therefore the same calculation as the base case holds true by replacing $X_{i_2}$ by $X_{i_k}$ and $X_{i_3}$ by $X_{i_{(k+1)}}$. Therefore $\rho_{i_1i_{(k+1)}} = \rho_{i_1i_k}\rho_{i_ki_{(k+1)}}$. By the inductive assumption, $\rho_{i_1i_k} = \prod_{l=2}^{k}\rho_{i_{l-1},i_l}$, therefore, $\rho_{i_1i_{(k+1)}} = \prod_{l=2}^{k+1}\rho_{i_{l-1},i_l}$.
\section{Proof of Covariance of noisy variables.}\label{app:cov_noisy}
\begin{lemma}
Consider 2 Random variables $X_i$ and $X_j$ with support on $\{-1, 1\}$ whose covariance is denoted by $\Sigma_{i,j}$. Now consider the noisy version of these random variables $X_i^e$ and $X_j^e$ whose covariance is denoted by $\Sigma_{i,j}'$. Then we have:
\begin{align*}
    \E{X_i^e} &= (1-2q_i)\E{X_i}\\
    \Sigma_{i,j}' &= (1-2q_i)(1-2q_j)\Sigma_{i,j}
\end{align*}
\end{lemma}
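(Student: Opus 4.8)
The plan is to realize the sign-flip noise as multiplication by an independent random sign and then push everything through by independence. For each node $i$ introduce a random variable $Z_i$, independent of $\setx$ and of all the other $Z_j$'s, with $Z_i = -1$ with probability $q_i$ and $Z_i = +1$ with probability $1-q_i$, so that the noisy variable is $X_i^e = Z_i X_i$. Two elementary observations do all the work: $\E{Z_i} = (1-q_i) - q_i = 1 - 2q_i$, and $Z_i^2 = 1$ almost surely. (Equivalently one could condition on the flip indicator of node $i$ and of node $j$ and sum the four cases, but the multiplicative encoding keeps the bookkeeping minimal.)

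For the mean, independence of $Z_i$ and $X_i$ gives $\E{X_i^e} = \E{Z_i X_i} = \E{Z_i}\E{X_i} = (1-2q_i)\E{X_i}$. For the second moment with $i \neq j$, the triple $Z_i$, $Z_j$, $(X_i,X_j)$ is mutually independent (flips are independent across nodes and independent of $\setx$), so $\E{X_i^e X_j^e} = \E{Z_i Z_j X_i X_j} = \E{Z_i}\E{Z_j}\E{X_i X_j} = (1-2q_i)(1-2q_j)\E{X_i X_j}$. Subtracting the product of the means and using the mean identity twice, $\Sigma'_{i,j} = \E{X_i^e X_j^e} - \E{X_i^e}\E{X_j^e} = (1-2q_i)(1-2q_j)\bigl(\E{X_i X_j} - \E{X_i}\E{X_j}\bigr) = (1-2q_i)(1-2q_j)\Sigma_{i,j}$.

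There is no real obstacle here; the only thing to be careful about is the independence structure of the noise, which is exactly the modeling assumption rather than something to be proven. I would also flag explicitly that the covariance identity is stated for $i \neq j$: for $i = j$ it fails, since $(X_i^e)^2 = 1$ forces $\Sigma'_{i,i} = 1 - (1-2q_i)^2\E{X_i}^2$ rather than $(1-2q_i)^2\Sigma_{i,i}$, and this boundary case is what makes Equation~(\ref{eq:quadratic}) nontrivial.
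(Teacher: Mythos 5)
Your proof is correct and is essentially the paper's argument: the paper expands $\E{X_i^e X_j^e}$ over the four flip configurations, which is exactly the computation your multiplicative encoding $X_i^e = Z_i X_i$ packages via $\E{Z_i} = 1-2q_i$ and independence. Your remark that the identity holds only for $i \neq j$ is consistent with the paper, which handles the variance separately via $\Sigma_{i,i} = 1 - (1-\Sigma'_{i,i})/(1-2q_i)^2$.
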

\begin{proof}
By the noise model we have:
\begin{equation}\label{eq:noisy_mean}
\begin{aligned}
    \E{X_i^e} &= (1-q_i)\E{X_i} + q_i\E{-X_i}\\
    \E{X_i^e} &= (1-2q_i)\E{X_i}.
\end{aligned}
\end{equation}
We also have:
\begin{equation}
\begin{aligned}
    \E{X_i^eX_j^e} =& (1-q_i)(1-q_j)\E{X_iX_j} + (1-q_j)q_i\E{-X_iX_j} + \\
    &(1-q_i)q_j\E{-X_iX_j} + q_iq_j\E{X_iX_j}\\
    =&(1-2q_i)(1-2q_j)\E{X_iX_j}.
\end{aligned}
\end{equation}
Therefore, 
\begin{equation}\label{eq:noisy_covar}
    \begin{aligned}
        \Sigma_{i,j}' &= \E{X_i^eX_j^e} - \E{X_i^e}\E{X_j^e}\\
        &=(1-2q_i)(1-2q_j)(\E{X_iX_j} - \E{X_i}\E{X_j})\\
        &=(1-2q_i)(1-2q_j)\Sigma_{i,j}
    \end{aligned}
\end{equation}
\end{proof}
We can use Equation (\ref{eq:noisy_mean}) to calculate the variance of every random variable in terms of the variance of its noisy counterpart as follows:
\begin{equation}\label{eq:noisy_var}
\begin{aligned}
    \Sigma_{i,i} =& 1 - \E{X_i}^2\\
    =& 1 - \frac{\E{X_i^e}^2}{(1-2q_i)^2}\\
    =& 1 - \frac{1 - \Sigma'_{i,i}}{(1-2q_i)^2}
    \end{aligned}
\end{equation}

\section{Proof that the Quadratic gives a valid solution}\label{sec:valid_sol}
Consider the quadratic in Equation (\ref{eq:quadratic}). We prove that this equation always has a valid solution $q_1<0.5$ for any set of 3 nodes in a tree structured graphical model.

Whenever $0<1 - \Sigma'_{1,1} + \frac{\Sigma'_{1,2}\Sigma'_{1,3}}{\Sigma'_{2,3}}<1$, the solution is of the form $q_1 = \eta, 1-\eta$ where $0\leq\eta<0.5$. From Equations (\ref{eq:noisy_covar}) and (\ref{eq:noisy_var}), we have:
\begin{equation}
    1 - \Sigma'_{1,1} + \frac{\Sigma'_{1,2}\Sigma'_{1,3}}{\Sigma'_{2,3}} = (1-2q_1^2)(1-\Sigma_{1,1}+\frac{\Sigma_{1,2}\Sigma_{1,3}}{\Sigma_{2,3}}).
\end{equation}

The different possible configurations of any 3 nodes $X_1$, $X_2$ and $X_3$ in any tree structured graphical model are shown in Figure \ref{fig:configs}.
\begin{figure}
    \centering
    \includegraphics[scale = 0.35]{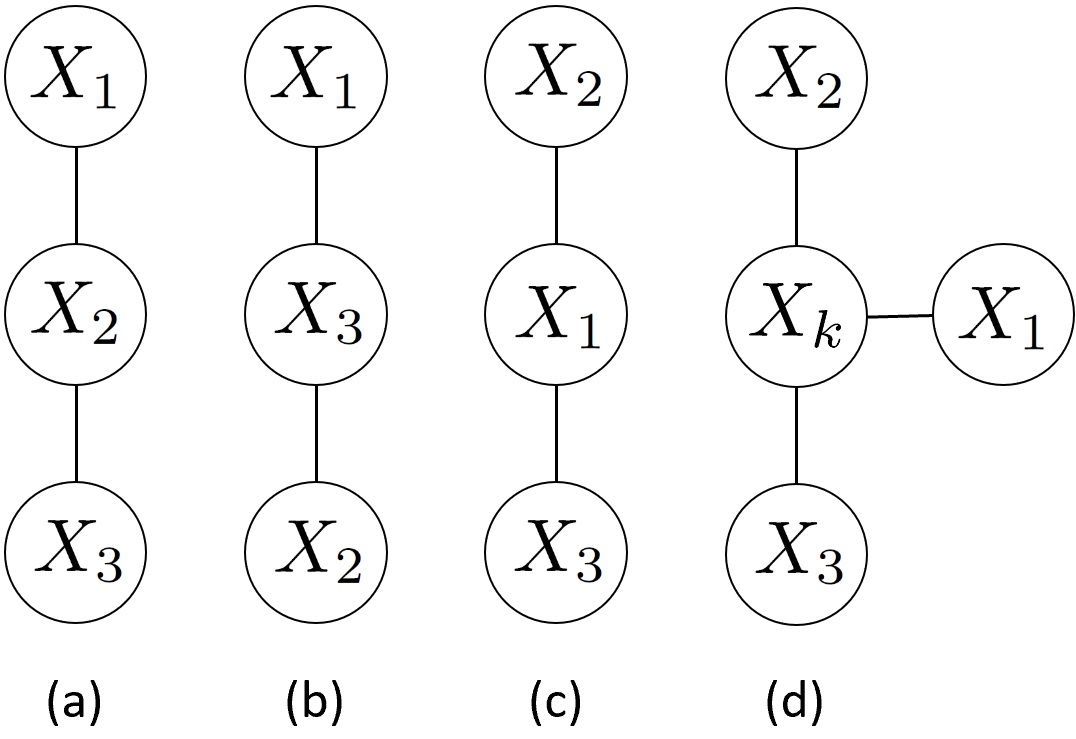}
    \caption{Different possible configurations of any set of 3 nodes.}
    \label{fig:configs}
\end{figure}
For case (a) we have $\Sigma_{2,2} \Sigma_{1,3}= \Sigma_{1,2}\Sigma_{2,3}$ by Lemma \ref{lemma:corr_decay}. This gives us:
\begin{equation*}
    1 - \Sigma'_{1,1} + \frac{\Sigma'_{1,2}\Sigma'_{1,3}}{\Sigma'_{2,3}} = (1-2q_1)^2(1-\Sigma_{1,1} + \frac{\Sigma_{1,2}^2}{\Sigma_{2,2}}).
\end{equation*}
Using the assumption that the absolute value of correlation is upper bounded away from 1 and lower bounded away from 0, we have $0<\Sigma_{1,2}^2 < \Sigma_{1,1}\Sigma_{2,2}$. Also, $0<\Sigma_{1,1}\leq 1$ and $0<(1-2q_1)^2\leq1$. Therefore, for case (a), 
$0< 1 - \Sigma'_{1,1} + \frac{\Sigma'_{1,2}\Sigma'_{1,3}}{\Sigma'_{2,3}} < 1$ and the quadratic equation has valid roots. By symmetry, the quadratic equation gives valid roots for case (b) too.

Case (c) is the underlying truth, therefore the quadratic equation recovers the true underlying error.

For case(d), we have $\Sigma_{k,k}\Sigma_{1,3} = \Sigma_{1,k}\Sigma_{3,k}$, $\Sigma_{k,k}\Sigma_{1,2} = \Sigma_{1,k}\Sigma_{2,k}$ and $\Sigma_{k,k}\Sigma_{2,3} = \Sigma_{2,k}\Sigma_{3,k}$. This gives us:
\begin{equation}
   1 - \Sigma'_{1,1} + \frac{\Sigma'_{1,2}\Sigma'_{1,3}}{\Sigma'_{2,3}} = (1-2q_1)^2(1-\Sigma_{1,1} + \frac{\Sigma_{1,k}^2}{\Sigma_{k,k}}).
\end{equation}
The same arguments as case (a) hold true for case (d) with node 2 replaced by node $k$. Therefore, the quadratic has a valid solution in this case too.
\section{Proof of Lemma \ref{lemma:equality_inequality}, Lemma \ref{lemma:star_equality} and Star/Non-star Condition for Generic Trees}\label{app:thm1_proof}
\subsection{Proof of Lemma \ref{lemma:equality_inequality}(a)}
\textit{Proof.} Note that $\hat{q}_1^{2,3}$ and $\hat{q}_1^{2,4}$ are given by solving an equation similar to (\ref{eq:quadratic}). As the solution to the quadratic is defined completely by the covariance terms, all we need to prove is:
\begin{equation*}
\begin{aligned}
    \dfrac{\cov{1}{2}'\cov{1}{3}'}{\cov{2}{3}'}= \dfrac{\cov{1}{2}'\cov{1}{4}'}{\cov{2}{4}'}\iff
    \dfrac{\cov{1}{3}'}{\cov{2}{3}'}= \dfrac{\cov{1}{4}'}{\cov{2}{4}'}.
\end{aligned}
\end{equation*}
By substituting the value of $\cov{i}{j}'$ from Equation \ref{eq:noisy_covar}, we now need to prove that:
\begin{equation*}
\dfrac{\cov{1}{3}}{\cov{2}{3}} = \dfrac{\cov{1}{4}}{\cov{2}{4}}\iff \dfrac{\corr{1}{3}}{\corr{2}{3}} = \dfrac{\corr{1}{4}}{\corr{2}{4}}.
\end{equation*}
Using the correlation decay property, we get that $\corr{1}{3} = \corr{1}{2}\corr{2}{3}$, $\corr{1}{4} =\corr{1}{2}\corr{2}{3}\corr{3}{4}$ and $\corr{2}{4} = \corr{2}{3}\corr{3}{4}$. Therefore LHS = RHS = $\corr{1}{2}$.

\subsection{Proof of Lemma \ref{lemma:equality_inequality}(b)}
\textit{Proof.} Using the same arguments as in the proof of Lemma \ref{lemma:equality_inequality}(a), we can conclude that we need to prove:
\begin{equation*}
    \dfrac{\cov{1}{3}'\cov{2}{4}'}{\cov{2}{1}'\cov{3}{4}'}\neq 1, \dfrac{\cov{2}{3}'\cov{1}{4}'}{\cov{1}{2}'\cov{3}{4}'}\neq 1
\end{equation*}
Substituting $\cov{i}{j}'$ from Equation (\ref{eq:noisy_covar}), we get:
\begin{equation*}
    \dfrac{\cov{1}{3}'\cov{2}{4}'}{\cov{2}{1}'\cov{3}{4}'} = \dfrac{\corr{1}{3}\corr{2}{4}}{\corr{2}{1}\corr{3}{4}}, \dfrac{\cov{2}{3}'\cov{1}{4}'}{\cov{1}{2}'\cov{3}{4}'} = \dfrac{\corr{2}{3}\corr{1}{4}}{\corr{1}{2}\corr{3}{4}}.
\end{equation*}
Using the correlation decay property, we get that:
\begin{equation}\label{eq:ratio}
 \dfrac{\corr{1}{3}\corr{2}{4}}{\corr{2}{1}\corr{3}{4}}= \dfrac{\corr{2}{3}\corr{1}{4}}{\corr{1}{2}\corr{3}{4}} = \corr{2}{3}^2 \leq \rho_{max}^2 < 1   
\end{equation}

\subsection{Proof of Lemma \ref{lemma:star_equality}}
\textit{Proof.} This is equivalent to proving that
\begin{equation*}
    \dfrac{\cov{1}{3}'}{\cov{2}{3}'} = \dfrac{\cov{1}{4}'}{\cov{2}{4}'},  \dfrac{\cov{1}{2}'}{\cov{2}{4}'} = \dfrac{\cov{1}{3}'}{\cov{3}{4}'}
\end{equation*}
which is again equivalent to:
\begin{equation*}
    \dfrac{\corr{1}{3}}{\corr{2}{3}} = \dfrac{\corr{1}{4}}{\corr{2}{4}},  \dfrac{\corr{1}{2}}{\corr{2}{4}} = \dfrac{\corr{1}{3}}{\corr{3}{4}}.
\end{equation*}
Using the correlation decay property it is easy to see that:
\begin{equation*}
\begin{aligned}
    \dfrac{\corr{1}{3}}{\corr{2}{3}} = \dfrac{\corr{1}{4}}{\corr{2}{4}} = \corr{1}{2},
    \dfrac{\corr{1}{2}}{\corr{2}{4}} = \dfrac{\corr{1}{3}}{\corr{3}{4}}.
\end{aligned}
\end{equation*}

\subsection{Proof of Star/Non-star Condition for Generic Trees}\label{app:star_non_star_cond}
We show how to utilize the result on a set of 4 nodes to classify any set of 4 nodes as star/non-star in a generic tree.

If any 4 nodes $\{X_1, X_2, X_3, X_4\}$ in a tree graphical model form a non-star shape such that $(X_1, X_2)$ from a pair and are not arranged in a chain, there exist nodes $X_k$ and $X_{k'}$ such that the conditional independence structure is given by either Figure \ref{fig:non_star}(a) or \ref{fig:non_star}(b).
\begin{figure}
    \centering
    \includegraphics[scale=0.35]{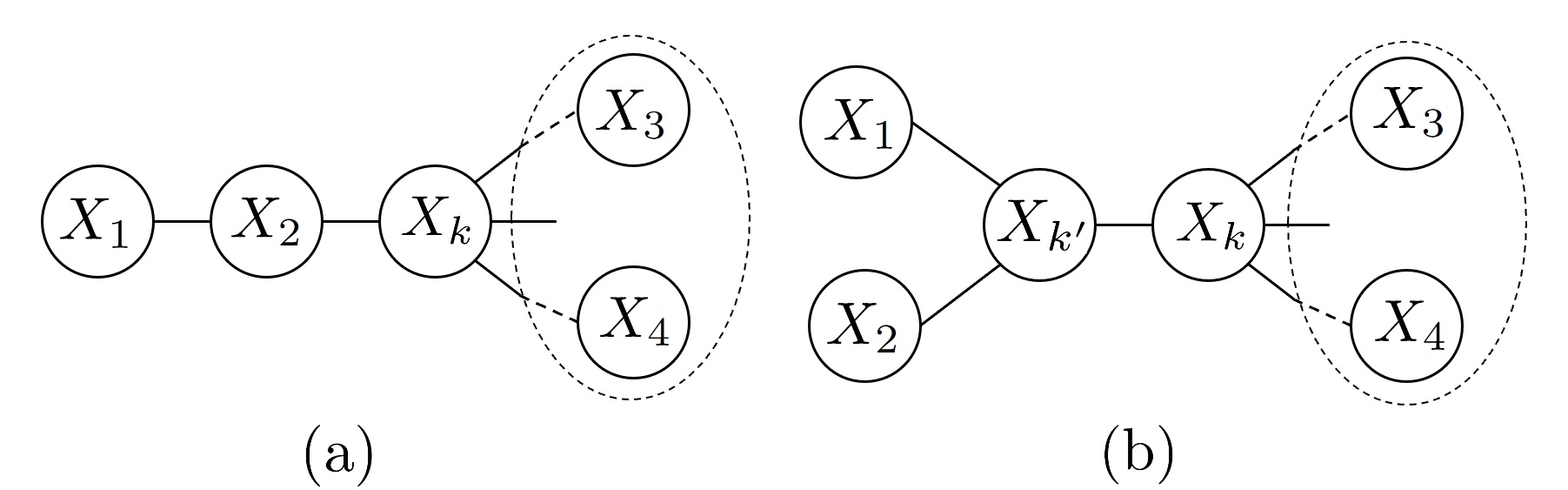}
    \caption{Possible conditional independence relations for non-star shape if they don't form a chain}
    \label{fig:non_star}
\end{figure}
For the conditional independence in Figure \ref{fig:non_star}(a), we know that:
\begin{equation}
\begin{aligned}
    \ind{\hat{q}}{4}{2}{3} &= \ind{\hat{q}}{4}{k}{2} \text{ By Lemma \ref{lemma:star_equality} on $\{X_2, X_3, X_4, X_k\}$},\\
    \ind{\hat{q}}{4}{1}{3} &= \ind{\hat{q}}{4}{k}{1} \text{ By Lemma \ref{lemma:star_equality} on $\{X_1, X_3, X_4, X_k\}$},\\
    \ind{\hat{q}}{4}{k}{2} &= \ind{\hat{q}}{4}{k}{1} \neq \ind{\hat{q}}{4}{1}{2}\text{ By Lemma \ref{lemma:equality_inequality}(a) and Lemma \ref{lemma:equality_inequality}(b)}\\
    &\text{on } \{X_1, X_2, X_k, X_4\}.
\end{aligned}
\end{equation}
This gives us $\ind{\hat{q}}{4}{2}{3} = \ind{\hat{q}}{4}{1}{3} \neq \ind{\hat{q}}{4}{1}{2}$. Similarly, we have $\ind{\hat{q}}{3}{2}{4} = \ind{\hat{q}}{3}{1}{4} \neq \ind{\hat{q}}{3}{1}{2}$.

We also know that:
\begin{equation}
\begin{aligned}
    \ind{\hat{q}}{2}{1}{3} &= \ind{\hat{q}}{2}{1}{k}\neq \ind{\hat{q}}{2}{k}{3} \text{ By Lemma \ref{lemma:equality_inequality}(a) and Lemma \ref{lemma:equality_inequality}(b)}\\
    &\text{on } \{X_1, X_2, X_k, X_3\},\\
    \ind{\hat{q}}{2}{1}{4} &= \ind{\hat{q}}{2}{1}{k}\neq \ind{\hat{q}}{2}{k}{4} \text{ By Lemma \ref{lemma:equality_inequality}(a) and Lemma \ref{lemma:equality_inequality}(b)}\\
    &\text{on } \{X_1, X_2, X_k, X_4\},\\
    \ind{\hat{q}}{2}{k}{3} &= \ind{\hat{q}}{2}{3}{4} =  \ind{\hat{q}}{2}{k}{4} \text{ By Lemma \ref{lemma:star_equality} on $\{X_2, X_3, X_4, X_k\}$},\\
    \ind{\hat{q}}{1}{2}{3} &= \ind{\hat{q}}{1}{2}{k}\neq \ind{\hat{q}}{1}{k}{3} \text{ By Lemma \ref{lemma:equality_inequality}(a) and Lemma \ref{lemma:equality_inequality}(b)}\\
    &\text{on } \{X_1, X_2, X_k, X_3\},\\
    \ind{\hat{q}}{1}{2}{4} &= \ind{\hat{q}}{1}{2}{k}\neq \ind{\hat{q}}{1}{k}{4} \text{ By Lemma \ref{lemma:equality_inequality}(a) and Lemma \ref{lemma:equality_inequality}(b)}\\
    &\text{on } \{X_1, X_2, X_k, X_4\},\\
    \ind{\hat{q}}{1}{k}{3} &= \ind{\hat{q}}{1}{3}{4} =  \ind{\hat{q}}{1}{k}{4} \text{ By Lemma \ref{lemma:star_equality} on $\{X_1, X_3, X_4, X_k\}$}.
\end{aligned}
\end{equation}
These equations imply $\ind{\hat{q}}{2}{1}{3} = \ind{\hat{q}}{2}{1}{4} \neq \ind{\hat{q}}{2}{3}{4}$ and $\ind{\hat{q}}{1}{2}{3} = \ind{\hat{q}}{1}{2}{4} \neq \ind{\hat{q}}{1}{3}{4}$.
If the conditional independence is as shown in Figure \ref{fig:non_star}(b), we have:
\begin{equation}
    \begin{aligned}
    &\ind{\hat{q}}{1}{2}{3} = \ind{\hat{q}}{1}{k'}{2} = \ind{\hat{q}}{1}{2}{4} = \ind{\hat{q}}{1}{k'}{4} \text{By Lemma \ref{lemma:star_equality} on }\\
    &\text{$\{X_1, X_2, X_3, X_{k'}\}$ and on $\{X_1, X_2, X_4, X_{k'}\}$},\\
    &\ind{\hat{q}}{1}{k'}{4} \neq \ind{\hat{q}}{1}{k}{4}\text{By Lemma \ref{lemma:equality_inequality}(b) on $\{X_1, X_k, X_{k'}, X_4\}$},\\
    &\ind{\hat{q}}{1}{k}{4} = \ind{\hat{q}}{1}{3}{4}\text{By Lemma \ref{lemma:star_equality} on $\{X_1, X_k, X_3, X_4\}$}.
    \end{aligned}
\end{equation}
These equations give us $\ind{\hat{q}}{1}{2}{3} = \ind{\hat{q}}{1}{2}{4} \neq \ind{\hat{q}}{1}{3}{4}$. Furthermore, by Equation (\ref{eq:ratio}), we have:
\begin{equation}\label{eq:non_star_ratios}
\begin{aligned}
    &\frac{\corr{1}{3}'\corr{2}{4}'}{\corr{1}{2}'\corr{3}{4}'} \leq \rho_{max}^2<1\\
    &\frac{\corr{1}{3}'\corr{2}{4}'}{\corr{1}{4}'\corr{2}{3}'} = 1
    \end{aligned}
\end{equation}
By symmetry, the remaining conditions in Equation (\ref{eq:non_star_cond}) are also satisfied.

\begin{figure}
    \centering
    \includegraphics[scale = 0.35]{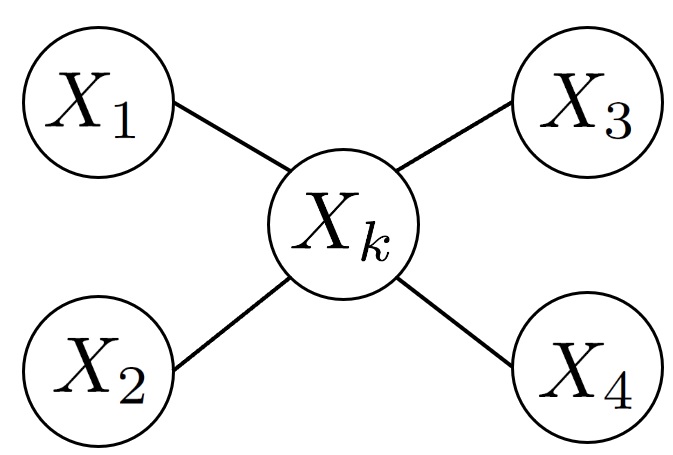}
    \caption{Possible conditional independence relations for a star shape.}
    \label{fig:star2}
\end{figure}

When the 4 nodes form a star structure in the tree, their conditional independence is given by either Figure \ref{fig:star} or there exists a node $X_k$ such that the conditional independence is as shown in Figure \ref{fig:star2}. Lemma \ref{lemma:star_equality} proves that Equation \ref{eq:star_cond} is satisfied if the conditional independence is given by Figure \ref{fig:star}. If the conditional independence is given by Figure \ref{fig:star2}, we have:
\begin{equation}
    \begin{aligned}
    \ind{\hat{q}}{1}{2}{3} = \ind{\hat{q}}{1}{2}{k} = \ind{\hat{q}}{1}{k}{3}\text{ By Lemma \ref{lemma:star_equality} on $\{X_1, X_2, X_3, X_k\}$},\\
    \ind{\hat{q}}{1}{4}{3} = \ind{\hat{q}}{1}{4}{k} = \ind{\hat{q}}{1}{k}{3}\text{ By Lemma \ref{lemma:star_equality} on $\{X_1, X_3, X_4, X_k\}$},\\
    \ind{\hat{q}}{1}{2}{4} = \ind{\hat{q}}{1}{2}{k} = \ind{\hat{q}}{1}{k}{4}\text{ By Lemma \ref{lemma:star_equality} on $\{X_1, X_2, X_4, X_k\}$}.
    \end{aligned}
\end{equation}
This implies that $\ind{\hat{q}}{1}{2}{3} = \ind{\hat{q}}{1}{4}{3} = \ind{\hat{q}}{1}{4}{2}$. By symmetry, all the remaining conditions of Equation \ref{eq:star_cond} are also satisfied.

This completes the proof that just by having access to the noisy probability distribution, it is possible to categorize any set of 4 nodes as a star/non-star shape.

\section{Proof of Theorem \ref{thm:unident}}\label{app:thm2_proof}
Given the noisy variance $\cov{i}{j}'$ and an estimate of the error probability vector $\mathbf{\hat{q}}$, we estimate the non-noisy covariance as:
\begin{equation}\label{eq:corr_estimate}
    \hat{\Sigma}_{i,j} = \dfrac{\cov{i}{j}'}{(1-2\hat{q}_i)(1-2\hat{q}_j)} = \dfrac{\cov{i}{j}(1-2q_i)(1-2q_j)}{(1-2\hat{q}_i)(1-2\hat{q}_j)}\forall i\neq j.
\end{equation}
For the error probability vector $\mathbf{\hat{q}}$, using Equation (\ref{eq:noisy_var}) the non-noisy variance is estimated as:
\begin{equation}
\hat{\Sigma}_{i,i} = 1 - \frac{1 - \cov{i}{i}'}{(1 - 2\hat{q}_i)^2}
\end{equation}

To check if any conditional independence relation $X_i\perp X_j|X_k$ is true, we need to verify if it satisfies the correlation decay equation $\hat{\Sigma}_{i,j}\hat{\Sigma}_{k,k} = \hat{\Sigma}_{i,k}\hat{\Sigma}_{k,j}$. 

We first consider $T'$ where only one leaf node exchanges position with its neighbor. Suppose in the original tree, node $X_1$ is a leaf node connected to node $X_2$.

Consider the error vector $\mathbf{\hat{q}}$:
\begin{equation}\label{eq:q_hat}
    \begin{aligned}
        \hat{q}_i &= q_i \text{ $\forall$ } i\neq 1,2,\\
        \hat{q}_1 &= \frac{1}{2}\left(1-(1-2q_1)\sqrt{\frac{\cov{1}{2}^2}{\cov{2}{2}} - \cov{1}{1} + 1}\right),\\
        \hat{q}_2 &= 0.
    \end{aligned}
\end{equation}
To prove that this error vector results in $T'$, we need to prove that any node $X_k \neq X_1, X_2$ which satisfies $X_1\perp X_k|X_2$ in $T^*$ must satisfy $X_2\perp X_k|X_1$ in $T'$. We note that:
\begin{equation}
\begin{aligned}
\hat{\Sigma}_{1,2} &= \frac{\cov{1}{2}(1-2q_1)(1-2q_2)}{(1 - 2\hat{q}_1)}, \hat{\Sigma}_{1,k} = \frac{\cov{1}{k}(1-2q_1)}{(1-2\hat{q}_1)}\\
\hat{\Sigma}_{2,k} &= \cov{2}{k}(1-2q_2), \hat{\Sigma}_{1,1} = 1 - \frac{(1 - \cov{1}{1})(1-2q_1)^2}{(1 - 2\hat{q}_1)^2}
\end{aligned}
\end{equation}
Using $\cov{1}{k}\cov{2}{2} = \cov{1}{2}\cov{2}{k}$, it is easy to check that $\hat{\Sigma}_{2,k}\hat{\Sigma}_{1,1} = \hat{\Sigma}_{1,k} \hat{\Sigma}_{1,2}$.

Furthermore, we need to prove that any pair of nodes $X_{k_1},X_{k_2}\neq X_1, X_2$ such that $X_{k_1}\perp X_{k_2}|X_2$ in $T^*$ satisfy $X_{k_1}\perp X_{k_2}|X_1$ in $T'$. Doing similar substitutions by replacing node 2 by node $k_1$ and node $k$ by node $k_2$ gives us $\hat{\Sigma}_{1,1}\hat{\Sigma}_{k_1, k_2} = \hat{\Sigma}_{1,k_1}\hat{\Sigma}_{1, k_2}$ which proves that $X_{k_1}\perp X_{k_2}|X_1$.


The remaining conditional independences not involving $X_1$ and $X_2$ remain intact as the error probability for the remaining nodes is assigned to the original probability of error.

Now, consider a tree $T'$ in which a set of leaf nodes $\mathcal{S}'$ exchange positions with their neighbors. For this case consider the error probability vector $\mathbf{\hat{q}}$ such that:
\begin{equation*}
    \begin{aligned}
        \hat{q}_i =& \frac{1}{2}\left(1-(1-2q_i)\sqrt{\frac{\cov{i}{j}^2}{\cov{j}{j}} - \cov{i}{i} + 1}\right), \\
        			  &\text{ $\forall$ } i\in \mathcal{S}', j = Parent(i)\\
        \hat{q}_j =& 0 \text{ $\forall$ } i\in \mathcal{S}', j = Parent(i)\\
        \hat{q}_k =& q_k, \text{ otherwise.}
    \end{aligned}
\end{equation*}

This is obtained by performing the same procedure on each leaf node one by one.

\section{Pseudo-code, Proof of Correctness, Time-complexity}\label{app:algo_details}
\subsection{Pseudo-code proof of correctness and Time-complexity for \fa}
The pseudo-code is presented in Algorithm \ref{alg:FindEC}.
\begin{algorithm}
\caption{Algorithm to find the Equivalence Cluster in $\mathcal{X}_{sub}\cup X_i$ containing node $X_i$}\label{alg:FindEC}
\begin{algorithmic}[1]
\Procedure{FindEC}{$X_i$, $\mathcal{X}_{sub}$}
\If{$|\setx_{sub}|==0$}
\State{return [$X_i$  ]}
\EndIf
\If{$|\setx_{sub}|==1$}
\State{$\sete.append((X_i, \setx_{sub}[0]))$}
\State{return $X_i \cup\setx_{sub}$}
\EndIf
\If{$|\setx_{sub}|==2$}
\State{$\sete.append((X_i, \setx_{sub}[0]))$}
\State{$\sete.append((\setx_{sub}[1], \setx_{sub}[0]))$}
\State{return $X_i \cup \setx_{sub}$}
\EndIf
\State{$EC\gets $[ ]}
\State{$candidate\_nodes \gets \tilde{\mathcal{P}}_i^1 \cap \mathcal{X}$}
\For{$X_j$ in $candidate\_nodes$} \
\State{$X_{k_1} \in candidate\_nodes\setminus X_j$}
\State{$IsInEC \gets True$}
\State{$test\_nodes\gets \tilde{\mathcal{P}}^2_i \cap \tilde{\mathcal{P}}^2_j \cap \tilde{\mathcal{P}}_{k_1}^2 \cap\setx_{sub}$}
\For{$X_{k_2}$ in $test\_nodes$}
\If{$min(\frac{\corr{i}{k_1}'\corr{j}{k_2}'}{\corr{i}{k_2}'\corr{j}{k_1}'}, \frac{\corr{i}{k_2}'\corr{j}{k_1}'}{\corr{i}{k_1}'\corr{j}{k_2}'} )<t$}
\State{$IsInEC\gets False$}
\EndIf
\EndFor
\If{$IsInEC == True$}
\State{$EC.$append$(X_{k_1})$}
\EndIf
\EndFor
\State{$\sete.append(X_i, EC[0])$}
\If{$len(EC)>1$}
\For{$i = 1$ to $len(EC)$}
\State{$\sete.append(EC[i], EC[0])$}
\EndFor
\EndIf
\State{return $X_i \cup EC$}
\EndProcedure
\end{algorithmic}
\end{algorithm}
\subsubsection*{Proof of correctness}
By the definition of $\tilde{\setp{1}}$, $candidate\_nodes$ contains at least all the nodes within a radius of 4 of $X_i$. Therefore, it contains all the nodes in the equivalence cluster containing $X_i$. Now we check for each node in $candidate\_nodes$ whether it is in the equivalence cluster of $X_i$. 

For any node $X_j$ to be in the equivalence cluster of $X_i$, every set of 4 nodes $\{X_i, X_j, X_{k_1}, X_{k_2}\}$ need to either be a star shape or a non-star shape where $(X_i, X_j)$ form a pair. 

In order to check this it is sufficient to check the condition for any set of 4 nodes $\{X_i, X_j, X_{k_1}, X_{k_2}\}$, for any random $X_{k_1}\in candidate\_nodes\setminus X_j$ and every $X_{k_2}\in \tilde{\mathcal{P}}^2_i \cap \tilde{\mathcal{P}}^2_j \cap \tilde{\mathcal{P}}_{k_1}^2\cap\setx_{sub}$. Suppose $X_j$ is not in the same equivalence cluster as $X_i$. This implies that there exist nodes in the path from $X_i$ to $X_j$. By the definition of the set $\tilde{\mathcal{P}}^2$, the node with a potential edge with $X_i$ is in $\tilde{\mathcal{P}}^2_i \cap \tilde{\mathcal{P}}^2_j \cap \tilde{\mathcal{P}}_{k_1}^2 \cap\setx_{sub}$. This will result in $\{X_i, X_j, X_{k_1}, X_{k_2}\}$ forming a non-star such that $(X_i, X_{k_1})$ form a pair.

\subsubsection*{Time-complexity}
As there are 2 for loops, it is easy to see that this subroutine is $\Or(n^2)$.

\subsection{Pseudo-code, proof of correctness and Time-complexity for \fb}

\begin{algorithm}
\caption{Algorithm to split nodes close to $X_i$ in $\setx\setminus \setx_{last}$ which have an edge with $EC(X_i)$. $\setx_{last}$ is such that $EC(X_i)\cup X_{leaf}\in \setx_{last}$.}\label{alg:splittree}
\begin{algorithmic}[1]
\Procedure{SplitTree}{$X_i, X_{leaf}, \mathcal{X}_{last}$}

\State{$close\_nodes \gets \tilde{\mathcal{P}}_i^1 \cap\tilde{\mathcal{P}}_{ext}^1\cap \bar{\mathcal{X}}_{last}$}
\State{$split \gets |close\_nodes|\times |close\_nodes|$ array of 0}
\For{$X_p$ in $close\_nodes$}
\For{$X_q\neq X_p$ in $\tilde{\mathcal{P}}_i^1 \cap\tilde{\mathcal{P}}_{ext}^1 \cap \tilde{\mathcal{P}}^2_p$}
\If{$\{X_i, X_{leaf}, X_p, X_q\}$ is non-star and $X_i, X_{leaf}$ form a pair}
\If{$X_q$ is in $\mathcal{X}_{last}$}
\State{$split_{p, :} \gets 0$ }
\State{Break from the inner loop}
\Else
\State{$split_{p,q} \gets 1$}
\EndIf
\EndIf
\EndFor
\EndFor
\State{Merge rows of $split$ with 1 in the same column}
\State{Return unique rows of $split$}
\EndProcedure
\end{algorithmic}
\end{algorithm}
The pseudo code is provided in Algorithm \ref{alg:splittree}.
\subsubsection*{Proof of Correctness}
$\setx_{last}$ is such that $EC(X_i), X_{leaf}\in \setx_{last}$. Furthermore, if any node $X_k\in \setx_{last}$ then the nodes from the equivalence cluster that has an edge with $EC(X_{leaf})$ from that connected component obtained by removing edges with $EC(X_i)$ are also in $\setx_{last}$. 
Let $\setx_{cc}$ denote the set of nodes in the subtrees of the original tree obtained by removing $X_i$ that don't contain any node from $\setx_{last}$.
We split the nodes in $\setx_{cc}\cap \tilde{\setp{1}}\cap\tilde{\mathcal{P}}_{ext}^1$ into subtrees. We first check if a node $X_k\in \setx_{cc}$. $X_k\in \tilde{\setp{1}}\cap\tilde{\mathcal{P}}_{ext}^1$ and $X_k\not\in \setx_{cc}$ if and only if $X_k \in \setx_{last}$ or there exists $X_{k'} \in \setx_{last}\cap \tilde{\mathcal{P}}_k^2$ such that $\{X_i, X_{leaf}, X_{k}, X_{k'}\}$ form a non-star and ($X_{k}$, $X_{k'}$) form a pair. This would happen when $X_{k'}$ is a node from the equivalence cluster that has an edge with $EC(X_{leaf})$. It is easy to see that $\{X_i, X_{leaf}, X_{k}\}$ are in each other's proximal sets. By the definition of $\tilde{\mathcal{P}}^2$, $X_{k'}\in\tilde{\mathcal{P}}^2_k$. Also $X_{k'}$ is within a radius of 4 from $X_i, X_{leaf}$, therefore $X_{k'}\in\tilde{\mathcal{P}}^2_i\cap\tilde{\mathcal{P}}^2_{ext}$.


For the nodes in $\setx_{cc}$, 2 nodes $X_{k_1}, X_{k_2}$ lie in the same subtree if $\{X_i, X_{leaf}, X_{k_1}, X_{k_2}\}$ form a non-star such that $(X_{k_1}, X_{k_2})$ are in the same pair. This is true from the definition of non-star shape. To implement this, we construct a square matrix $split$ of size $|\tilde{\mathcal{P}}_i^1 \cap\tilde{\mathcal{P}}_{ext}^1\cap \bar{\mathcal{X}}_{last}|\times |\tilde{\mathcal{P}}_i^1 \cap\tilde{\mathcal{P}}_{ext}^1\cap \bar{\mathcal{X}}_{last}|$ which contains 1 in position $(j_1, j_2)$ if $j_1^{th}$ and $j_2^{th}$ node in $\tilde{\mathcal{P}}_i^1 \cap\tilde{\mathcal{P}}_{ext}^1\cap \bar{\mathcal{X}}_{last}$ are $X_{k_1}$ and $X_{k_2}$ respectively and $\{X_i, X_{leaf}, X_{k_1}, X_{k_2}\}$ are in each others $\tilde{\mathcal{P}}^2$ and the set is classified as a non-star such that $(X_i, X_{leaf})$ form a pair. It has 0 at other positions. If it turns out that a node is not in $\setx_{cc}$, the row corresponding to that node is set to 0. Then, we merge the nodes which have 1 in common columns till no 2 rows have a 1 in a common column. 
The creation of the $split$ matrix and the merging are both $\mathcal{O}(n^2)$ operation. Therefore, the whole function is an $\mathcal{O}(n^2)$.

\subsection{Proof of Correctness of \fc }
\begin{lemma}\label{lemma:rec_proof}
$\fc$ correctly recovers all the edges between the nodes in $\mathcal{X}_{cc}$ as well as the edges between $\mathcal{X}_{cc}$ and $X_i$.
\end{lemma}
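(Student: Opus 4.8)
The plan is to argue by induction on the size of the node set under consideration, mirroring the recursive structure of $\fc$. The base case is when $\mathcal{X}_{cc}$ together with $X_i$ contains no edges to be discovered beyond the equivalence cluster $EC(X_i)$ itself, which $\fa$ handles directly and correctly by the already-established correctness of $\fa$. For the inductive step, I would fix a recursive call to $\fc$ with arguments $X_i$, $X_{leaf}$, $\setx_{last}$ satisfying the invariant that $EC(X_i)\cup X_{leaf}\subseteq \setx_{last}$ and, moreover, that $\setx_{last}$ is "downward closed" in the sense described before Algorithm \ref{alg:splittree}: whenever a node of a connected component (obtained by deleting the edges incident to $EC(X_i)$) lies in $\setx_{last}$, the whole equivalence cluster of that component adjacent to $EC(X_i)$ lies in $\setx_{last}$. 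This invariant is exactly what makes $\setx_{cc}$ — the union of the remaining subtrees hanging off $X_i$ — well-defined and disjoint from $\setx_{last}$.

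The key steps, in order, are: (i) invoke correctness of $\fb$ (already proved in the excerpt) to conclude that $subtrees$ is exactly the partition of $\setx_{cc}\cap\tilde{\mathcal{P}}_i^1\cap\tilde{\mathcal{P}}_{ext}^1$ into the genuine subtrees of the true tree hanging off $X_i$; (ii) observe that for each such $subtree$, when we adjoin $X_i$, the node $X_i$ becomes a leaf of the induced subtree on $subtree\cup X_i$, so that $\fa(X_i, subtree)$ correctly returns the equivalence cluster of $X_i$ \emph{within that subtree}, i.e. the set of nodes in $subtree$ that are adjacent to $X_i$ (and hence adds the correct edge(s) between $X_i$ and that subtree); (iii) verify that the updated set $\setx_{last}' = \setx_{last}\cup subtrees\setminus subtree\cup EC$ again satisfies the invariant — the other subtrees are entirely in $\setx_{last}'$, the newly learned cluster $EC$ adjacent to $X_i$ is in $\setx_{last}'$, and $X_i$ itself plays the role of the new $X_{leaf}$; (iv) apply the inductive hypothesis to the recursive call $\textsc{Recurse}(EC[0], X_i, \setx_{last}')$, which recovers all edges strictly inside that subtree and between that subtree and $EC[0]$. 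Summing over the subtrees: every edge of $\setx_{cc}$ lies inside exactly one subtree (and is found by the recursive call) or is incident to $X_i$ (and is found in step ii), and every edge between $\setx_{cc}$ and $X_i$ is found in step (ii); no spurious edges are added because $\fa$ and $\fb$ add only correct edges. This establishes the claim.

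The main obstacle I anticipate is step (iii): carefully checking that the invariant on $\setx_{last}$ is preserved across the recursion, in particular that the proximal-set truncations do not cause a node of some not-yet-processed subtree to be wrongly excluded or a node of a processed subtree to leak back in. This is precisely the delicate point addressed by the "Need for 2 proximal sets" discussion and by the radius-4 choice, so the argument must lean on: (a) $\tilde{\mathcal{P}}^1_{(\cdot)}$ containing all nodes within true-graph radius 4, hence containing all of $EC(X_i)$ and its neighboring clusters; and (b) for any node $X_k$ with $X_k\in\tilde{\mathcal{P}}^1$ but not in $\setx_{cc}$, the set $\tilde{\mathcal{P}}^2_k\cap\setx_{last}$ containing the first node on the path from $X_k$ into the already-processed region, which triggers the non-star test $(X_k,X_{k'})$-pair and correctly quarantines $X_k$. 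Once the invariant is nailed down, the rest is bookkeeping over the partition into subtrees plus the previously proved correctness of $\fa$ and $\fb$, together with a standard structural fact that deleting all edges incident to $X_i$ from a tree yields components each of which, re-joined with $X_i$, is a tree in which $X_i$ is a leaf.
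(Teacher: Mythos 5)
Your proposal is correct and follows essentially the same route as the paper: induction on the size of $\mathcal{X}_{cc}$ driven by the recursion, invoking the already-established correctness of $\fb$ to get the subtree partition and of $\fa$ (with $X_i$ acting as a leaf of each subtree) to get the edges incident to $X_i$, then applying the inductive hypothesis to each strictly smaller subtree after updating $\setx_{last}$. The invariant on $\setx_{last}$ that you flag as the delicate point is exactly the precondition the paper states before its \textsc{SplitTree} analysis and verifies (somewhat tersely) via the observation that $\setx_{last}' \supset \setx_{last}$ contains the sibling subtrees and the newly learned cluster.
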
 
\begin{proof}
We prove this lemma by induction on the number of nodes in $\setx_{cc}$.

\textbf{Base Case ($|\setx_{cc}| = 2$):}

Note that $|\setx_{cc}| = 1$ is not possible because if that were the case, $\setx_{cc}$ would be in $EC(X_i)$ and hence in $\setx_{last}$ which would imply $|\setx_{cc}| = 0$ which is a contradiction.

Also, note that when $|\setx_{cc}| = 2$, the nodes in $\setx_{cc}$ have to form a non-star when considered with $X_i$ and $X_{leaf}$ otherwise its nodes are going to be in $EC(X_i)$. By the correctness of $\fb$, $\setx_{cc}$ is correctly recognized as the only subtree in step 2. By the correctness of $\fa$, the algorithm correctly identifies that the nodes in $\setx_{cc}$ form an equivalence cluster and that this equivalence cluster has an edge with $EC(X_i)$. $\setx_{last}'$ now contains $\setx_{cc}$. The next recursion call terminates after step 2 as $subtrees$ is empty.

\textbf{Inductive Step: }

Suppose the Lemma \ref{lemma:rec_proof} is correct for $|\setx_{cc}|\leq k$.

Lets look at what happens when $|\setx_{cc}| = k+1$. 
\begin{itemize}
    \item In step 2, the algorithm correctly splits the nodes in $\setx_{cc}\cap \tilde{\setp{1}}\cap\tilde{\mathcal{P}}_{ext}^1$ into $subtrees$. Suppose this results in $j$ subtrees. Each of these subtrees have at least 2 nodes by the same argument as for the base case.
    \item By the correctness of $\fa$, step 4 correctly finds the equivalence cluster $EC$ in each subtree which has an edge with $EC(X_i)$. This proves the part of the lemma that the edges between $\setx_{cc}$ and $EC(X_i)$ are correctly identified.
    
    \item For each $subtree$ $\setx_{last}' =  \setx_{last}\cup (subtrees\setminus subtree)\cup EC$. Therefore $\setx_{last}'\supset \setx_{last}$.
    
    \item For the next recursion step, $X_i^+ = EC[0]$ and $\setx_{cc}^+\subset \setx_{cc}$ as $EC\in \setx_{cc}$ whereas $EC\not\in \setx_{cc}$.
    
    \item By the inductive assumption, all the edges within $\setx_{cc}^+$ are correctly recovered. Furthermore, the edges between $EC$ and $\setx_{cc}^+$ are correctly recovered. 
    
    \item This is true for all $subtree\in subtrees$ obtained in step 2.
    
    \item Therefore, all the edges within $\setx_{cc}$ are correctly recovered.
\end{itemize}

\end{proof}
\subsection{Proof of correctness and time complexity of the complete algorithm}

If the tree has just one Equivalence cluster, the algorithm discovers it by the correctness of $\fa$ and terminates. If the tree has more than one equivalence clusters, there would be at least one equivalence cluster with more than one nodes. By the correctness of $\fa$, the algorithm correctly finds one such equivalence cluster $EC$. It updates the $\setx_{last}$ with this equivalence cluster. Then it calls the $\fc$ function. The set $\setx_{cc}$ for this call to the $\fc$ function is $\setx\setminus EC$. By the correctness of $\fc$, the algorithm correctly recovers all the edges within $\setx\setminus EC$ and the edges between $\setx\setminus EC$ and $EC$. Therefore, it correctly recovers the equivalence class of the tree.

\subsection*{Time-complexity}
In the initialization step, $\fa$ is called at most $n$ times and hence takes $\Or(n^3)$ time. Within the $\fc$ function, for each call to $\fb$, $\fa$ is called at least once unless the $subtrees$ is empty, in which case $\fc$ terminates. Therefore, number of times $\fb$ is called within $\fc$ is upper bounded by the number of times $\fa$ is called within $\fc$+1. Whenever $\fa$ is called from within the $\fc$ function, it discovers at least one edge. Therefore $\fa$ can be called at most $n-2$ times within $\fc$. Therefore, the total time complexity of $\fc$ is $\Or(n^3)$. Therefore, the total time complexity of $\textsc{FindTree}$ is $\Or(n^3)$.

\section{Sample Complexity Analysis}\label{app:sample_complexity}

We first calculate how accurate we need $\tilde{\rho}_{i,j}$ to be in order to make sure that any set of $4$ nodes in each other's $\mathcal{P}^2$ are correctly classified as a star or a non-star. We then use the Hoeffding's concentration bound to get a high probability bound for sample complexity.

Consider a set of 4 nodes $\{X_{i_1}, X_{i_2}, X_{i_3}, X_{i_4}\}$ which forms non-star such that $(X_{i_1}, X_{i_2})$ form a pair. For these nodes to be correctly classified, we need:
\begin{equation*}
\begin{aligned}
    \frac{\tilde{\rho}_{i_1, i_3}\tilde{\rho}_{i_2, i_4}}{\tilde{\rho}_{i_1, i_4}\tilde{\rho}_{i_2, i_3}} > t_3,\\
    \frac{\tilde{\rho}_{i_1, i_3}\tilde{\rho}_{i_2, i_4}}{\tilde{\rho}_{i_1, i_2}\tilde{\rho}_{i_3, i_4}} < t_3,
\end{aligned}
\end{equation*}
where $t_3 = (1+\rho_{max}^2)/2$ and $\tilde{\rho}_{i,j}$ is the empirical correlation obtained from noisy samples between nodes $X_i$ and $X_j$. We look at the first condition. This is equivalent to the condition:
\begin{equation*}
\begin{aligned}
    \frac{\tilde{\Sigma}_{i_1, i_3}\tilde{\Sigma}_{i_2, i_4}}{\tilde{\Sigma}_{i_1, i_4}\tilde{\Sigma}_{i_2, i_3}} > t_3,\\
    \frac{\tilde{\Sigma}_{i_1, i_3}\tilde{\Sigma}_{i_2, i_4}}{\tilde{\Sigma}_{i_1, i_2}\tilde{\Sigma}_{i_3, i_4}} < t_3.
\end{aligned}
\end{equation*}

This would be true if:
\begin{equation}\label{eq:emp_corr}
    \left|\frac{\tilde{\Sigma}_{i_1, i_3}\tilde{\Sigma}_{i_2, i_4}}{\tilde{\Sigma}_{i_1, i_4}\tilde{\Sigma}_{i_2, i_3}} - \frac{{\Sigma}_{i_1, i_3}'{\Sigma}_{i_2, i_4}'}{{\Sigma}_{i_1, i_4}'{\Sigma}_{i_2, i_3}'}\right|< \frac{1-t_3}{2}
\end{equation}
as $\frac{{\Sigma}_{i_1, i_3}'{\Sigma}_{i_2, i_4}'}{{\Sigma}_{i_1, i_4}'{\Sigma}_{i_2, i_3}'} = 1$. Let ${\Sigma}_{i, j}' = \tilde{\Sigma}_{i, j} + \delta_{i, j}$. Then, we have:
\begin{align*}
    \frac{{\Sigma}_{i_1, i_3}'{\Sigma}_{i_2, i_4}'}{{\Sigma}_{i_1, i_4}'{\Sigma}_{i_2, i_3}'} &= \frac{(\tilde{\Sigma}_{i_1, i_3} + \delta_{i_1, i_3})(\tilde{\Sigma}_{i_2, i_4} + \delta_{i_2, i_4})}{(\tilde{\Sigma}_{i_1, i_4} + \delta_{i_1, i_4})(\tilde{\Sigma}_{i_2, i_3} + \delta_{i_2, i_3})}\\
    &= \frac{\tilde{\Sigma}_{i_1, i_3}\tilde{\Sigma}_{i_2, i_4}}{\tilde{\Sigma}_{i_1, i_4}\tilde{\Sigma}_{i_2, i_3}}
    \dfrac{\left( 1+ \frac{\delta_{i_1, i_3}}{\tilde{\Sigma}_{i_1, i_3}}\right)
    \left(1+ \frac{\delta_{i_2, i_4}}{\tilde{\Sigma}_{i_2, i_4}}\right)}
    { \left(1 + \frac{\delta_{i_1, i_4}}{\tilde{\Sigma}_{i_1, i_4}}\right) 
    \left(1 + \frac{\delta_{i_2, i_3}}{\tilde{\Sigma}_{i_2, i_3}}\right)}\\
    &= \frac{\tilde{\Sigma}_{i_1, i_3}\tilde{\Sigma}_{i_2, i_4}}{\tilde{\Sigma}_{i_1, i_4}\tilde{\Sigma}_{i_2, i_3}}
    \dfrac{\left( 1+ \frac{\delta_{i_1, i_3}}{\tilde{\Sigma}_{i_1, i_3}} + \frac{\delta_{i_2, i_4}}{\tilde{\Sigma}_{i_2, i_4}} + \frac{\delta_{i_1, i_3}\delta_{i_2, i_4}}{\tilde{\Sigma}_{i_1, i_3}\tilde{\Sigma}_{i_2, i_4}} \right)}
    { \left(1 + \frac{\delta_{i_1, i_4}}{\tilde{\Sigma}_{i_1, i_4}} + \frac{\delta_{i_2, i_3}}{\tilde{\Sigma}_{i_2, i_3}} +  \frac{\delta_{i_2, i_3}\delta_{i_1, i_4}}{\tilde{\Sigma}_{i_2, i_3}\tilde{\Sigma}_{i_1, i_4}}\right)} \\
   &\hspace{-1cm} \text{Setting } \delta = max_{i_1,i_2} |\delta_{i_1i_2}|, \tilde{\Sigma}_m = \min_{i,j} |\tilde{\Sigma}_{i, j}|, \text{ and if } \frac{\delta}{\tilde{\Sigma}_m} \leq 0.4, \text{ we get } \dots\\
    &\leq \frac{\tilde{\Sigma}_{i_1, i_3}\tilde{\Sigma}_{i_2, i_4}}{\tilde{\Sigma}_{i_1, i_4}\tilde{\Sigma}_{i_2, i_3}}
    \dfrac{\left( 1+ \frac{2\delta}{\tilde{\Sigma}_m} + \frac{\delta^2}{\tilde{\Sigma}_m^2}  \right)}
    { \left(1 - \frac{2\delta}{\tilde{\Sigma}_m}\right)} \\
    &\leq \frac{\tilde{\Sigma}_{i_1, i_3}\tilde{\Sigma}_{i_2, i_4}}{\tilde{\Sigma}_{i_1, i_4}\tilde{\Sigma}_{i_2, i_3}}
    \dfrac{\left( 1+ \frac{2.5\delta}{\tilde{\Sigma}_m} \right)}
    { \left(1 - \frac{2\delta}{\tilde{\Sigma}_m}\right)} \\
    &= \frac{\tilde{\Sigma}_{i_1, i_3}\tilde{\Sigma}_{i_2, i_4}}{\tilde{\Sigma}_{i_1, i_4}\tilde{\Sigma}_{i_2, i_3}}
\left( 1+ \frac{2.5\delta}{\tilde{\Sigma}_m} \right)
\left(1 + \frac{2\delta}{\tilde{\Sigma}_m} + \left(\frac{2\delta}{\tilde{\Sigma}_m} \right)^2 \dots \right) \\
&<\frac{\tilde{\Sigma}_{i_1, i_3}\tilde{\Sigma}_{i_2, i_4}}{\tilde{\Sigma}_{i_1, i_4}\tilde{\Sigma}_{i_2, i_3}}
\left( 1+ \frac{2.5\delta}{\tilde{\Sigma}_m} \right)\left(1 + \frac{6\delta}{\tilde{\Sigma}_m}\right)\\
    &< \frac{\tilde{\Sigma}_{i_1, i_3}\tilde{\Sigma}_{i_2, i_4}}{\tilde{\Sigma}_{i_1, i_4}\tilde{\Sigma}_{i_2, i_3}}
    \left( 1+ \frac{16\delta}{\tilde{\Sigma}_m } \right)
\end{align*}

Similarly, we can show the other side of inequality. 
\begin{align*}
    \frac{{\Sigma}_{i_1, i_3}'{\Sigma}_{i_2, i_4}'}{{\Sigma}_{i_1, i_4}'{\Sigma}_{i_2, i_3}'} \geq 
    \frac{\tilde{\Sigma}_{i_1, i_3}\tilde{\Sigma}_{i_2, i_4}}{\tilde{\Sigma}_{i_1, i_4}\tilde{\Sigma}_{i_2, i_3}}
\left( 1 - \frac{16\delta}{\tilde{\Sigma}_m} \right)\\
\end{align*}

Using $|\Sigma_{i,j}| \geq 0.5 t_2$ along with some basic algebraic manipulation yields:
\begin{align*}
    \left|\frac{\tilde{\Sigma}_{i_1, i_3}\tilde{\Sigma}_{i_2, i_4}}{\tilde{\Sigma}_{i_1, i_4}\tilde{\Sigma}_{i_2, i_3}} - \frac{{\Sigma}_{i_1, i_3}'{\Sigma}_{i_2, i_4}'}{{\Sigma}_{i_1, i_4}'{\Sigma}_{i_2, i_3}'}\right|&< \frac{\tilde{\Sigma}_{i_1, i_3}\tilde{\Sigma}_{i_2, i_4}}{\tilde{\Sigma}_{i_1, i_4}\tilde{\Sigma}_{i_2, i_3}}\cdot \frac{16\delta}{\tilde{\Sigma}_m} \\
    &<  \frac{16\delta}{\tilde{\Sigma}_m^3} \\
    &< \frac{128\delta}{t_2^3} 
\end{align*}
Comparing this with Equation (\ref{eq:emp_corr}), we get:
\begin{equation*}
    \delta<\frac{t_2^3(1-t_3)}{128}.
\end{equation*}
It is easy to check that the $\delta$ obtained for the other condition of non-star shape and for star shape is also the same.

Now we look at the concentration of empirical correlation random variable for the noisy samples from the tree structured Ising model.

Since $X_i'$, $X_j'$ and $X_i'X_j'$ have support on $\{-1,1\}$, we can use the Hoeffding's inequality to obtain the concentration bounds for them. Suppose we draw $m$ independent samples from the noisy distribution. Let $Z^i_1, Z^i_2, \dots Z^i_m$ denote the $m$ samples of $X'_i$, $Z^j_1, Z^j_2, \dots Z^j_m$ denote the $m$ samples of $X'_j$ and $Z^{ij}_1, Z^{ij}_2, \dots Z^{ij}_m$ denote the $m$ samples of $X'_j X'_i$. Now, if we have:
\begin{equation*}
\begin{aligned}
|\frac{1}{m}\sum_{k=1}^m Z^{ij}_k - \E{X_i'X_j'}| < \delta/2,\\
|\frac{1}{m^2}(\sum_{k=1}^m Z^{i}_k)(\sum_{k=1}^m Z^{j}_k) - \E{X_i'}\E{X_j'}| < \delta/2,
\end{aligned}
\end{equation*}
then $|\Sigma'_{i,j} - \tilde{\Sigma}_{i,j}| < \delta$.

Let $\bar{X_i'}$, $\bar{X_j'}$ and $\overline{X_i'X_j'}$ denote the sample means of $X_i'$, $X_j'$ and $X_i'X_j'$ respectively. Further, let $\bar{X_i'} = \E{X_i'} + \alpha_i$ and $\bar{X_j'} = \E{X_j'} + \alpha_j$ and $\alpha_{max} = \max\{|\alpha_i|, |\alpha_j|\}$
\begin{equation}
\begin{aligned}
    |\bar{X_i'}\bar{X_j'} - \E{X_i'}\E{X_j'}|&=|(\E{X_i'} + \alpha_i)(\E{X_j'} + \alpha_j)-\E{X_i'}\E{X_j'}|\\
    &= |\alpha_i\E{X_j'}+\alpha_j\E{X_i'}+\alpha_i\alpha_j|\\
    &\leq |\alpha_i||\E{X_j'}|+|\alpha_j||\E{X_i'}|+|\alpha_i||\alpha_j|\\
    &\leq 4\alpha_{max} \text{ (As $|\E{X_i'}|, |\E{X_j'}| < 1, |\alpha_i| < 2$)}
\end{aligned}
\end{equation}
Therefore, when $|\bar{X_i'} - \E{X_i'}|<\delta/8$ and $|\bar{X_j'} - \E{X_j'}|<\delta/8$, we have $|\bar{X_i'}\bar{X_j'} - \E{X_i'}\E{X_j'}|<\delta/2$

By Hoeffding's inequality we have:
\begin{equation}
\begin{aligned}
    P(|\overline{X_i'X_j'} - \E{X_i'X_j'}|> \frac{\delta}{2}) &\leq 2\exp\left(\frac{-m (\delta/2)^2}{2}\right),\\
    P(|\bar{X_i'} - \E{X_i'}|> \frac{\delta}{8}) &\leq 2\exp \left(\frac{-m(\delta/8)^2}{2}\right),\\
    P(|\bar{X_j'} - \E{X_j'}|> \frac{\delta}{8}) &\leq 2\exp \left(\frac{-m(\delta/8)^2}{2}\right).
    \end{aligned}
\end{equation}
Therefore, 
\begin{equation}\label{eq:one_bad_event}
\begin{aligned}
P(|\Sigma'_{i,j} - \tilde{\Sigma}_{i,j}| > \delta) &\leq 2\exp\left(\frac{-m (\delta/2)^2}{2}\right)+ 4\exp\left(\frac{-m (\delta/8)^2}{2}\right)\\
&\leq 6\exp\left(\frac{-m (\delta/8)^2}{2}\right)
\end{aligned}
\end{equation}
We define a bad event $\mathcal{B}$ as follows:
\begin{align}
    \mathcal{B} := \left\{ |\tilde{\Sigma}_{i, j} - \Sigma_{i,j}'|> \delta \text{ for any } i, j \in [n], i \neq j\right\}
\end{align}
Using union bound on Equation \ref{eq:one_bad_event}, we bound the probability of  $\mathcal{B}$ by:
\begin{equation}
    P(\mathcal{B})\leq (n^2)\left( 6\exp\left(\frac{-m (\delta/8)^2}{2}\right)\right).
\end{equation}
Therefore, for the probability of mistake to be bounded by $\tau$, the number of samples is given by:
\begin{equation}
    m \geq \frac{128}{\delta^2}\log\left(\frac{6n^2}{\tau}\right)
\end{equation}

\bibliography{references}
\bibliographystyle{abbrv}

%
%
%
%
\end{document}